\newtheorem{definition}{Definition}
\newtheorem{lemma}{Lemma}
\newtheorem{theorem}{Theorem}
\DeclareMathOperator*{\minimize}{minimize}
\DeclareMathOperator*{\argmin}{arg\,min}
\DeclareMathOperator{\E}{\mathbb{E}}
\begin{document}

\title{Adaptive Covariate Acquisition for Minimizing Total Cost of Classification}

\author{Daniel Andrade (andrade@nec.com) \\
	\footnotesize{Central Research Laboratories, NEC} \\ \\
	%
	Yuzuru Okajima (y-okajima@nec.com) \\
	\footnotesize{Central Research Laboratories, NEC}}
	
\date{}

\maketitle

\begin{abstract}
In some applications, acquiring covariates comes at a cost which is not negligible.
For example in the medical domain, in order to classify whether a patient has diabetes or not, measuring glucose tolerance can be expensive. Assuming that the cost of each covariate, and the cost of misclassification can be specified by the user, our goal is to
minimize the (expected) total cost of classification, i.e. the cost of misclassification plus the cost of the acquired covariates. 
We formalize this optimization goal using the (conditional) Bayes risk and describe the optimal solution using a recursive procedure. Since the procedure is computationally infeasible, we consequently introduce two assumptions: (1) the optimal classifier can be represented by a generalized additive model, (2) the optimal sets of covariates are limited to a sequence of sets of increasing size. 
We show that under these two assumptions, a computationally efficient solution exists.
Furthermore, on several medical datasets, we show that the proposed method 
achieves in most situations the lowest total costs when compared to various previous methods.
Finally, we weaken the requirement on the user to specify all misclassification costs
by allowing the user to specify the minimally acceptable recall (target recall).
Our experiments confirm that the proposed method achieves the target recall while 
minimizing the false discovery rate and the covariate acquisition costs better than previous methods.
\end{abstract}

\section{Introduction}

In some applications, acquiring covariates comes at a cost which is not negligible.
For example, in the medical domain, in order to classify whether a patient has diabetes or not, measuring glucose tolerance can be expensive. On the other hand, glucose tolerance can be a good indicator for diabetes, i.e. increases our chance of predicting diabetes (or its absence) correctly. 

The example illustrates that in the medical domain we often have to strike a balance between classification accuracy and the cost of acquiring the covariates. 
A rational criteria to decide on the best trade-off is to minimize the \emph{expected total cost of classification}: expected cost of misclassification plus the cost of acquired covariates.

In the first part of this article, we formalize the optimization of the expected total cost of classification using the (conditional) Bayes risk and describe the optimal solution using a recursive procedure. 
However, it turns out that the procedure is computationally infeasible due to basically two factors:
(1) calculating the Bayes risk requires to estimate a high dimensional integral,
(2) the number of different covariate acquisition paths is exponential in the number of covariates.

As a consequence, we introduce two assumptions: (1) the optimal classifier can be represented by a generalized additive model (GAM), (2) the optimal sets of covariates are limited to a sequence of sets of increasing size. 
We show that under these two assumptions, a computationally efficient solution exists.

Our framework requires that the user can specify the cost of misclassification: the false positive cost (cost of wrongly classifying a healthy person as having diabetes), and false negative cost (cost of classifying a diabetes patient as healthy). However, we show that the requirement on the user to specify the false negative cost can be replaced by the specification of a lower bound on the recall. This is motivated by the medical domain where it is more common to specify the minimally acceptable recall (target recall), rather than specifying the false negative cost.

Our main contributions are as follows:

\begin{enumerate}
	\item We describe the optimal solution for minimizing the expected total cost of classification, which has not been clarified in previous works like \citep{dulac2012sequential} and \citep{shim2018joint}.
	\item We prove that for a GAM, the estimation of the (conditional) Bayes risk reduces to a one dimensional density estimation and integral which can be solved computationally efficiently.
	\item We propose an effective heuristic to estimate an optimal monotone increasing sequence of covariate sets by learning the regression coefficients of GAM with a group lasso penalty.
	\item We prove that our framework can be used to guarantee a user-specified recall level, like 95\% which is common in the medical domain.
	\item We show on four medical datasets that the proposed method can lead 
	to lower total cost of classification than the previous works in \citep{ji2007cost,dulac2012sequential,xu2012greedy,nan2017adaptive,shim2018joint}.
	Furthermore, evaluation under the requirement of a target recall shows that the proposed method achieves the target recall while minimizing the remaining costs and false discovery rate (FDR) better than previous works.
\end{enumerate}

This article extends our preliminary work in \citep{andrade2019efficient} by replacing the linear classifier with GAM, allowing the specification of a minimal recall, 
determining the sequence of covariate sets by the solution path of a group lasso penalized convex optimization problem, and additional experimental evaluations on two more medical datasets.

In the next section, we formalize the optimal decision procedure to achieve, in expectation, the lowest total cost of classification.
In Section \ref{sec:COS}, we introduce a (non-adaptive) method that minimizes an upper of the lowest achievable total cost, which we extend in Section \ref{sec:AdaCOS} to an adaptive method.
In Section \ref{sec:covariateSetSelection}, we explain two approximations for finding a sequence of monotone increasing covariate sets that is used by the proposed method.
In Section \ref{sec:recallGuarantee}, we show how the proposed framework can also be used for guaranteeing a target recall.
Extensive empirical evaluations of our proposed method and previous methods are provided in Section \ref{sec:experiments}. In Section \ref{sec:relatedWork}, we give a concise review of related work. Finally, in Section \ref{sec:conclusions}, we summarize our findings.

\section{A cost rational selection criteria} \label{sec:costRationalSelectionCriteria}

Let  $L := \{l_1, \ldots, l_c\}$ denote the set of class labels, and $c_{y, y^*}$ the cost of classifying a sample as class $y^*$, when the true label is $y$.
A decision procedure $\delta^*: \mathbb{R}^p \rightarrow L$ for which
\begin{align*} 
\forall \delta: \, \E_{\mathbf{x},y} [c_{y,\delta(\mathbf{x})}] \geq \E_{\mathbf{x}, y} [c_{y,\delta^*(\mathbf{x})}] \, 
\end{align*}
is called a Bayes procedure.
The following procedure $\delta^*$ is a Bayes procedure (for a proof see, for example, Theorem 6.7.1 in \cite{anderson2003introduction}):
\begin{align} 
\label{eq:bayesProcedureDefinition}
\begin{split}
\delta^*(\mathbf{x}) 
&= \argmin_{y^* \in L} \sum_{y \in L}  p(y | \mathbf{x}) \cdot c_{y, y^*} \\
&= \argmin_{y^* \in L} \E_y [c_{y, y^*} ] \, .
\end{split}
\end{align}
The expected misclassification cost of the Bayes procedure, i.e. $\E_{\mathbf{x}, y} [c_{y,\delta^*(\mathbf{x})}]$, is called the Bayes risk.

Let us denote by $V := \{1, \ldots, p\}$ the index set of covariates with $V \cap L = \emptyset$. 
We denote the Bayes procedure for classifying a sample based only on the covariates $S \subseteq V$ by $\delta^*_{S} : \mathbb{R}^{|S|} \rightarrow L$. That means
\begin{align} \label{eq:bayesProcedureConditionalDefinition}
\delta^*_S(\mathbf{x}_S) = \argmin_{y^* \in L} \sum_{y \in L} p(y | \mathbf{x}_S) \cdot c_{y, y^*} \, .
\end{align}
When it is clear from the context, we drop the index on $\delta^*_S$, and just write $\delta^*(\mathbf{x}_S)$ instead of $\delta^*_S(\mathbf{x}_S)$.\footnote{Remark about our notation: we denote by bold font a column vector, e.g. $\mathbf{x} \in \mathbb{R}^p$, and a column vector indexed by a set $A \subseteq V$ denotes the corresponding sub-vector, e.g. $\mathbf{x}_A \in \mathbb{R}^{|A|}$.}


\subsection{Optimal Procedure} \label{sec:optimalProcedure}

The classical definition of Bayes procedure does not consider the cost of covariate acquisition, and assumes that all covariates are acquired at once. 
Therefore, let us first formally extend the definition appropriately. 

We use the following definition of a decision procedure. 
\begin{definition}
	A function of the form
	\begin{align*} 
	\pi: \mathbb{R}^p \times 2^V \rightarrow L \cup V \, ,
	\end{align*}
	which fulfills, $\forall \mathbf{x} \in \mathbb{R}^p, S \subseteq V$:
	\begin{align}
	& \pi(\mathbf{x}, S)  = \pi(\mathbf{x} \odot \mathbf{1}_S, S) \, ,  \label{eq:DPcondition1} \\
	& \pi(\mathbf{x}, S) \in L \cup (V \setminus S) \, ,  \label{eq:DPcondition2}
	\end{align}
	is called a decision procedure.\footnote{$\odot$ denotes the Hadamard product, and $\mathbf{1}_S \in \mathbb{R}^p$ is the vector that is one in all positions indexed by $S$, and zero otherwise.}
\end{definition}
The condition in Equation \eqref{eq:DPcondition1} means that a decision procedure uses only the covariates that are indexed by $S$; 
the condition in Equation \eqref{eq:DPcondition2} means that a decision procedure cannot select a covariate that is already in $S$. 
In summary, the decision procedure $\pi(\mathbf{x}, S)$ either classifies the current sample, or selects a new covariate based on the observations $\mathbf{x}_S$.
To simplify the notation, we write $\pi(\mathbf{x}_S)$ instead of $\pi(\mathbf{x}, S)$. Furthermore, we denote the cost of acquiring covariate $i$ by $c_i$.

Given a sample $\mathbf{x}$ with class label $y$, we denote the loss of a decision procedure $\pi$ as $l( (\mathbf{x}, y), \pi)$. 
The loss can be computed recursively as follows. 
Let $l( (\mathbf{x}, y), \pi) := l( (\mathbf{x}, y), \pi, \emptyset)$, with 
\begin{align} \label{eq:LossDefinition}
&l( (\mathbf{x}, y), \pi, S) =  
& \begin{cases}
c_{y, \pi(\mathbf{x}_S)} &  \text{if } \pi(\mathbf{x}_S) \in L , \\ 
c_{\pi(\mathbf{x}_S)} + l( (\mathbf{x}, y), \pi, S \cup \{ \pi(\mathbf{x}_S) \} ) & \text{else. }
\end{cases}
\end{align}
If not stated otherwise, we assume that all costs are non-negative, i.e. $c_i \geq 0$, and $c_{y,y'} \geq 0$.

\begin{theorem} \label{thm:optDecisionProcedure}
	The decision procedure $\pi^*$ defined by
	\begin{align} 
	\label{eq:optDecisionProcedure}
	\begin{split}
	\pi^*(\mathbf{x}_S) = 
	\argmin_{i \in L \cup (V \setminus S)} 
	\begin{cases}
	\E_{y} [c_{y,i} | \mathbf{x}_S ]  & \text{if } i \in L ,  \\
	c_i + \E_{\mathbf{x}_{V \setminus S}, y} \big[l((\mathbf{x}, y), \pi^*, S \cup \{i\}) | \mathbf{x}_{S} \big] & \text{else.}  
	\end{cases} \\
	\end{split}
	\end{align}
	is a Bayes procedure. That means for any other decision procedure $\pi$ we have
	\begin{align*} 
	\E_{\mathbf{x}, y} [l( (\mathbf{x}, y), \pi^*) ] \leq \E_{\mathbf{x}, y} [l( (\mathbf{x}, y), \pi) ] \, .
	\end{align*}
\end{theorem}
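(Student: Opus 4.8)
The plan is to prove, by backward induction on the number $k := |V \setminus S|$ of not-yet-acquired covariates, a pointwise strengthening of the statement — that $\pi^*$ is optimal from every ``sub-state'' $(\mathbf{x}_S, S)$ — and then specialize to $S = \emptyset$. For a decision procedure $\pi$, a set $S \subseteq V$ and a realization $\mathbf{x}_S$, define the residual risk $r_\pi(\mathbf{x}_S, S) := \E_{\mathbf{x}_{V\setminus S}, y}[l((\mathbf{x},y),\pi,S) \mid \mathbf{x}_S]$; by condition \eqref{eq:DPcondition1} this is a well-defined function of $\mathbf{x}_S$, and since all costs are non-negative the expectations exist in $[0,\infty]$. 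First note that $\pi^*$ is itself well-defined: the right-hand side of \eqref{eq:optDecisionProcedure} at state $S$ refers to $\pi^*$ only at states $S \cup \{i\}$ with a strictly larger acquired set, and $|S|$ is capped at $p$, so the recursion terminates and $\pi^*$ is constructed by backward recursion on $k$. The claim to establish is: for all $S$, all $\mathbf{x}_S$, and all decision procedures $\pi$, $\;r_{\pi^*}(\mathbf{x}_S,S) \le r_\pi(\mathbf{x}_S,S)$.

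For the base case $k = 0$, i.e.\ $S = V$, condition \eqref{eq:DPcondition2} forces $\pi(\mathbf{x}_V)\in L$, so \eqref{eq:LossDefinition} gives $r_\pi(\mathbf{x}_V,V) = \E_y[c_{y,\pi(\mathbf{x}_V)}\mid\mathbf{x}_V] \ge \min_{y^*\in L}\E_y[c_{y,y^*}\mid\mathbf{x}_V] = r_{\pi^*}(\mathbf{x}_V,V)$, the last equality holding because at $S=V$ the $\argmin$ in \eqref{eq:optDecisionProcedure} runs over $L$ alone. For the inductive step, assume the claim for every $S'$ with $|V\setminus S'| = k$ and take $S$ with $|V\setminus S| = k+1$; fix $\mathbf{x}_S$ and a decision procedure $\pi$. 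If $\pi(\mathbf{x}_S)\in L$, then as in the base case $r_\pi(\mathbf{x}_S,S) \ge \min_{y^*\in L}\E_y[c_{y,y^*}\mid\mathbf{x}_S]$. If $\pi(\mathbf{x}_S) = i \in V\setminus S$, then \eqref{eq:LossDefinition} gives $l((\mathbf{x},y),\pi,S) = c_i + l((\mathbf{x},y),\pi,S\cup\{i\})$; taking $\E[\cdot\mid\mathbf{x}_S]$, refining the conditioning to $\mathbf{x}_{S\cup\{i\}}$ via the tower property, and applying the induction hypothesis pointwise at $S\cup\{i\}$ gives
\[ r_\pi(\mathbf{x}_S,S) = c_i + \E[\,r_\pi(\mathbf{x}_{S\cup\{i\}},S\cup\{i\}) \mid \mathbf{x}_S\,] \ge c_i + \E[\,r_{\pi^*}(\mathbf{x}_{S\cup\{i\}},S\cup\{i\}) \mid \mathbf{x}_S\,], \]
and the right-hand side equals $c_i + \E_{\mathbf{x}_{V\setminus S},y}[l((\mathbf{x},y),\pi^*,S\cup\{i\})\mid\mathbf{x}_S]$.

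In both cases $r_\pi(\mathbf{x}_S,S)$ is at least the minimum over $i\in L\cup(V\setminus S)$ of the two branches appearing in \eqref{eq:optDecisionProcedure}; and that minimum is exactly $r_{\pi^*}(\mathbf{x}_S,S)$, as one sees by unfolding \eqref{eq:LossDefinition} for $\pi^*$ in the two sub-cases $\pi^*(\mathbf{x}_S)\in L$ and $\pi^*(\mathbf{x}_S)\in V\setminus S$. This closes the induction, and setting $S=\emptyset$ — where $\mathbf{x}_\emptyset$ is vacuous and $r_\pi(\mathbf{x}_\emptyset,\emptyset) = \E_{\mathbf{x},y}[l((\mathbf{x},y),\pi)]$ — yields the theorem. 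The only real difficulty is book-keeping: the induction hypothesis must be the pointwise inequality between the \emph{functions} $r_{\pi^*}(\cdot,S')$ and $r_\pi(\cdot,S')$, not merely between their expectations, since it has to be integrated against the conditional law of the newly revealed coordinate $\mathbf{x}_i$ in the inductive step; and one should check that the nested conditional expectations are legitimate, which holds because costs are non-negative (all expectations lie in $[0,\infty]$) and condition \eqref{eq:DPcondition1} makes $\pi(\mathbf{x}_S)$ a function of $\mathbf{x}_S$ alone. Everything else is the standard finite-horizon dynamic-programming verification, the horizon being finite because $|S|$ strictly increases with each acquisition.
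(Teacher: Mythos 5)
Your proof is correct and follows essentially the same route as the paper's: backward induction on the acquired set, with the pointwise conditional inequality as the induction hypothesis, the loss decomposition plus tower property in the inductive step, and the unfolding of $\pi^*$'s residual loss into the minimum over actions (the paper isolates this last step as Lemma~\ref{lemma:equalityHelper}). The only (cosmetic) difference is that you compare $\pi^*$ directly against an arbitrary $\pi$, whereas the paper phrases the inductive step via an assumed Bayes procedure $\hat{\pi}$ and then argues the inequalities are equalities; your version is, if anything, slightly cleaner on that point.
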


The proof is given in the appendix. We note that, if the covariates are discrete, we can formulate the problem as a stationary Markov decision process (MDP) where every policy leads to a terminal state \citep{zubek2004pruning,bayer2004learning}. 
The Bayes procedure from Theorem \ref{thm:optDecisionProcedure} is then equivalent to the optimal policy defined by the Bellman updates with the discounting factor set to 1 \citep{russell2003artificial}.

For continuous covariates, implementing the exact decision procedure $\pi^*$ is, in general, intractable.
The reason is that in order to recursively evaluate the loss, we need to evaluate a sequence of interchanging minimizations and expectations.
Therefore, we propose two relaxations and corresponding methods named Cost-sensitive Covariate Selection (COS) and Adaptive Cost-sensitive Forward Selection (AdaCOS) in Section \ref{sec:COS} and \ref{sec:AdaCOS}, respectively.

\section{Cost-sensitive Covariate Selection (COS)} \label{sec:COS}

Our first relaxation is to pull-out all minimizations in the recursion of Equation \eqref{eq:optDecisionProcedure} which leads to the following upper bound:
\begin{align*} 
\E_{\mathbf{x}, y} [ l( (\mathbf{x}, y), \pi^*) ] \leq \min_{S \subseteq V} \Big( \E_{\mathbf{x}_S, y}[c_{y,\delta^*(\mathbf{x}_S)}] + \sum_{i \in S} c_i \Big)
\end{align*}

In the following, we denote this upper bound by $\mathcal{U}$.
However, directly trying to minimize $\mathcal{U}$ is still computationally difficult due to the exponential 
number of possible sets $S$. Note that this is similar to covariate selection in logistic classification like e.g. in \cite{tibshirani1996regression,o2009review}.
Two important differences are that, in general, costs associated with covariates can be different from each other, and that the Bayes risk needs to be evaluate for all possible subsets $S \subseteq V$. The situation is illustrated in Figure \ref{fig:evaluationTreeExample}.

We denote the method selecting the set $S^*$ that minimizes $\mathcal{U}$,
by Cost-sensitive Covariate Selection (COS). In order to approximately find the set $S^*$ we will use the methods described in Section \ref{sec:covariateSetSelection}.
One disadvantage of COS is that it always selects the same set of covariates $S^*$ for any sample, though for some samples less/more covariates might be sufficient/necessary for good classification accuracy.

\begin{figure*}[t]
	\centering
	\includegraphics[trim=0 230 230 0,clip=true,scale=0.60,page=1]{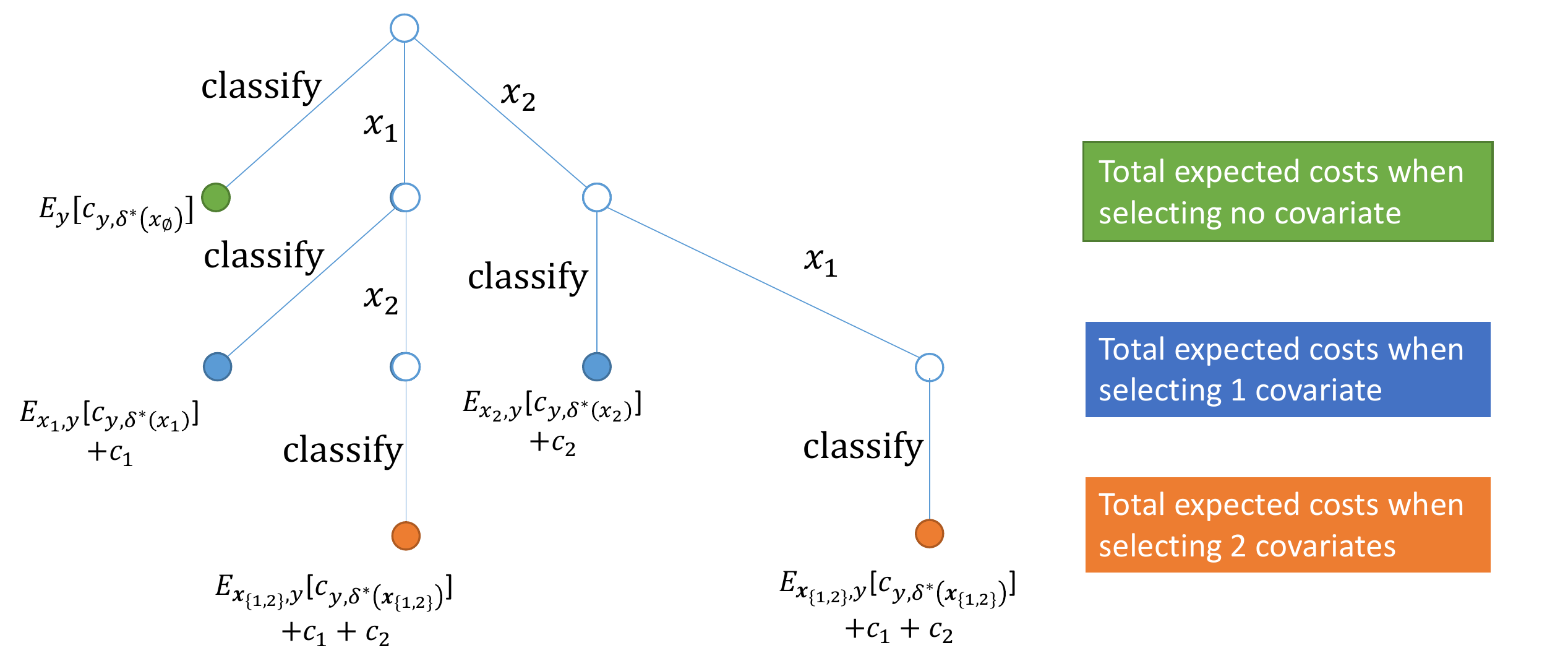}
	\caption{Shows an example of the expected total cost of classification at the beginning when no covariates have been acquired. Each edge represents one decision: either asking for the value of a covariate, or classifying the sample based on the observed covariates so far using the Bayes procedure. Each leaf shows the expected total cost of classification when using the covariates that were selected on the path from the root to the leaf. If we do not re-evaluate the expected cost after acquiring a new covariate, we will always select the same set of covariates, namely the method COS. \label{fig:evaluationTreeExample}}
\end{figure*}

\section{Adaptive Cost-sensitive Forward Selection} \label{sec:AdaCOS}

The method COS proposed in the previous section is not adaptive, i.e. 
it does not take into account the actually observed covariates in order to 
decide whether to proceed the acquisition of additional covariates, or whether to classify based on the covariates observed so far. 
However, as discussed in Section \ref{sec:optimalProcedure}, without any additional assumptions estimating the optimal procedure $\pi^*$ from Theorem \ref{thm:optDecisionProcedure} is computationally infeasible. 
We therefore introduce two assumptions:
\begin{enumerate}
	\item The optimal set S of acquired covariates belongs to 
	$\mathfrak{S} = \{S_1, S_2, \ldots S_q\}$, where $S_1 \subset S_2 \subset S_3 \ldots S_q \subseteq V$.
	\item The conditional class probability $p(y | \mathbf{x}_{S})$, for $S \in \mathfrak{S}$,   belongs to a logistic generalized additive model.
\end{enumerate}



Before we proceed, let us introduce our definition of future costs. Let $A \subseteq V$ and $S \subseteq V \setminus A$, then we define
\begin{align} \label{eq:definitionF}
F_{\mathbf{x}_{A}}(S) := 
\overbrace{\E_{\mathbf{x}_{S}, y} \big[ c_{y,\delta^*(\mathbf{x}_{A \cup S})}  | \mathbf{x}_{A} \big]}^{\text{(conditional) Bayes risk}} 
+ \overbrace{\sum_{i \in S} c_i}^{\text{covariate costs}} \, .
\end{align}
$F_{\mathbf{x}_{A}}(S)$ is the expected total additional cost of classification when we have already acquired the covariates $A$, and are planning to acquire additionally the covariates $S$ before classifying. 
In particular, the upper bound $\mathcal{U}$ can be expressed as 
$\min_{S \subseteq V} F_{\mathbf{x}_{\emptyset}}(S)$.

Our approximation of the Bayes procedure $\pi^*$ from Theorem \ref{thm:optDecisionProcedure} is given in Algorithm \ref{alg:AdaCOS}. First, we acquire all covariates indexed by $S_1$, and then check whether acquiring any additional covariates from $S_2 \setminus S_1, \ldots S_q \setminus S_1$ reduces the total cost of classification in expectation. If that is the case, we acquire the covariates in $S_2 \setminus S_1$, and proceed analogously. If the total cost of classification is not expected to decrease with more covariates, we stop and classify based on the covariates acquired so far. An example of the procedure is show in Figure \ref{fig:evaluationSequenceExample}.

\begin{algorithm}
	{\bf Input:} $S_1, \ldots, S_q$ \\
	$S_0 := \emptyset$ \\
	\For{$i \in \{1,\ldots, q-1\}$}{
		acquire $\mathbf{x}_{S_i \setminus S_{i-1}}$ \\
		\If{$\forall j \in \{i+1,.., q\}: F_{\mathbf{x}_{S_i}}(S_j \setminus S_i) \geq F_{\mathbf{x}_{S_i}}(\emptyset)$}{
			output class $\delta^*(\mathbf{x}_{S_i})$ \\
		}
	}
	\caption{Adaptive Cost-sensitive Forward Selection (AdaCOS) for classifying a test sample. \label{alg:AdaCOS}}
\end{algorithm}

The algorithm is adaptive in the sense that the expected future costs $F_{\mathbf{x}_{A}}(S)$ depend on the covariates $\mathbf{x}_{A}$ observed so far.
Therefore, we see that the effectiveness of the algorithm hinges on the non-trivial task of calculating $F_{\mathbf{x}_{A}}(S)$.

\begin{figure*}[t]
	\centering
	\includegraphics[trim=0 380 230 0,clip=true,scale=0.60,page=2]{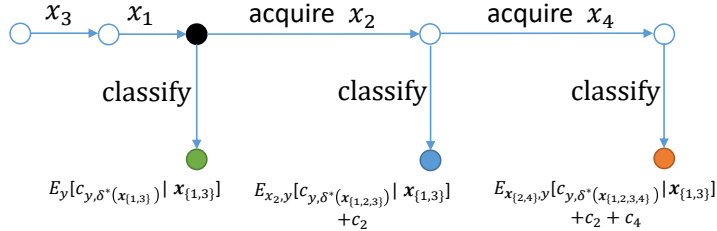}
	\caption{Shows an example where $S_1 = \{3\}$, $S_2 = \{3, 1\}$, $S_3 = \{3, 1, 2\}$, and 
		$S_4 = \{3, 1, 2, 4\}$. Assume we have acquired the covariates $S_2$. Then in order to decide whether to proceed acquisition, AdaCOS compares the current expected cost of misclassication marked in green with the expected future costs when additionally acquiring covariates $\{x_2\}$ or $\{x_2, x_4\}$, marked in blue and orange, respectively. \label{fig:evaluationSequenceExample}}
\end{figure*}

\subsection{Bayes Risk Estimation}
The main challenge in evaluating the future costs is to estimate the multi-dimensional integral in $\E_{\mathbf{x}_{S}, y} \big[ c_{y,\delta^*(\mathbf{x}_{A \cup S})}  | \mathbf{x}_{A} \big]$.
By assuming that the conditional class probability $p(y | \mathbf{x}_{A \cup S})$ can be modeled by a logistic generalized additive model, we will show that it is possible to reduce the multi-dimensional integral into a one-dimensional with an effective approximation.

The logistic generalized additive model is defined as follows.
Given the regression coefficients $\boldsymbol{\beta} = (\boldsymbol{\beta}_1, \boldsymbol{\beta}_2, \ldots, \boldsymbol{\beta}_p) \in \mathbb{R}^{s \cdot p}$, 
and intercept $\tau$, 
the conditional class probability $p(y = 1 | \mathbf{x}, \boldsymbol{\beta}, \tau)$ is modeled as 
\begin{align*}
p(y = 1 | \mathbf{x}_{A \cup S}, \boldsymbol{\beta}, \tau) = g\Big(\tau + \sum_{i \in A \cup S} \boldsymbol{\beta}_i^T f_i(x_i)\Big) \, ,
\end{align*}
where $g$ denotes the sigmoid function, and the non-linear transformations $f_i: \mathbb{R} \rightarrow \mathbb{R}^s$
are learned from the training data using penalized B splines, where $s$ is the number of splines \citep{hastie2009elements}.\footnote{For learning the B splines we make use of pyGAM available at \url{https://pygam.readthedocs.io/en/latest/}}


Furthermore, for simplicity, we assume that there are only two class labels $\{0, 1\}$, and $c_{0,0} = c_{1,1} = 0$.\footnote{Extension for allowing non-zero costs for $c_{y,y}$ is straight-forward, and omitted here.} We then have
%
%
%
\begin{align*}
& \E_{\mathbf{x}_{S}, y} \big[ c_{y,\delta^*(\mathbf{x}_{A \cup S})}  | \mathbf{x}_{A} \big] \\ 
&= \E_{\mathbf{x}_{S}} \big[  \sum_y c_{y,\delta^*(\mathbf{x}_{A \cup S})} p(y | \mathbf{x}_{A \cup S})  | \mathbf{x}_{A} \big] \\
&= \E_{\mathbf{x}_{S}} \big[  c_{0,\delta^*(\mathbf{x}_{A \cup S})} p(y=0 | \mathbf{x}_{A \cup S}) | \mathbf{x}_{A} \big]  \\
&\quad + \E_{\mathbf{x}_{S}} \big[ c_{1,\delta^*(\mathbf{x}_{A \cup S})} p(y=1 | \mathbf{x}_{A \cup S}) | \mathbf{x}_{A} \big] \, .
\end{align*}
Since
\begin{align*} 
\delta^*(\mathbf{x}_{A \cup S}) = \argmin [& p(y=1 | \mathbf{x}_{A \cup S}) \cdot c_{1, 0},  \\
& \,  p(y=0 | \mathbf{x}_{A \cup S}) \cdot c_{0, 1}] \, ,
\end{align*}
we have
\begin{align*} 
\delta^*(\mathbf{x}_{A \cup S}) = 1 &\Leftrightarrow \frac{p(y=1 | \mathbf{x}_{A \cup S}) \cdot c_{1, 0}}{p(y=0 | \mathbf{x}_{A \cup S}) \cdot c_{0, 1}} \geq 1 \\
&\Leftrightarrow \frac{g(\sum_{i \in A \cup S} \boldsymbol{\beta}_i^T f_i(x_i) + \tau) \cdot c_{1, 0}}{(1 - g(\sum_{i \in A \cup S} \boldsymbol{\beta}_i^T f_i(x_i) + \tau)) \cdot c_{0, 1}} \geq 1 \\
&\Leftrightarrow e^{\sum_{i \in A \cup S} \boldsymbol{\beta}_i^T f_i(x_i) + \tau} \geq \frac{c_{0, 1}}{c_{1, 0}} \\
&\Leftrightarrow \sum_{i \in A \cup S} \boldsymbol{\beta}_i^T f_i(x_i)  \geq \log (\frac{c_{0, 1}}{c_{1, 0}}) - \tau \\
&\Leftrightarrow \sum_{i \in S} \boldsymbol{\beta}_i^T f_i(x_i) \geq \log (\frac{c_{0, 1}}{c_{1, 0}}) - \tau - \sum_{i \in A} \boldsymbol{\beta}_i^T f_i(x_i) \\
&\Leftrightarrow z  \geq z^* \, ,
\end{align*}
where we defined $z := \sum_{i \in S} \boldsymbol{\beta}_i^T f_i(x_i)$, and $z^* := \log (\frac{c_{0, 1}}{c_{1, 0}}) - \tau - \sum_{i \in A} \boldsymbol{\beta}_i^T f_i(x_i)$.
We see that $\delta^*(\mathbf{x}_{A \cup S})$ depends only on $z$ (random variable) and $z^*$ (fixed). 
In the following, to simplify notation, let us denote by $h(z)$ the conditional distribution $p(z | \mathbf{x}_{A})$.
We thus have 
\begin{align*}
& \E_{\mathbf{x}_{S}} \big[ c_{1,\delta^*(\mathbf{x}_{A \cup S})} p(y=1 | \mathbf{x}_{A \cup S}) | \mathbf{x}_{A} \big] \\
&= \E_{\mathbf{x}_{S}} \big[ c_{1,\delta^*(\mathbf{x}_{A \cup S})} g(\sum_{i \in S} \boldsymbol{\beta}_i^T f_i(x_i) + \sum_{i \in A} \boldsymbol{\beta}_i^T f_i(x_i) + \tau) | \mathbf{x}_{A} \big] \\
&= \E_z \big[ c_{1,\delta^*(z, z^*)} g(z + \sum_{i \in A} \boldsymbol{\beta}_i^T f_i(x_i) + \tau) | \mathbf{x}_{A} \big] \\
&= \int c_{1,\delta^*(z, z^*)} g(z + \sum_{i \in A} \boldsymbol{\beta}_i^T f_i(x_i) + \tau) h(z) dz \\
&= \int_{- \infty}^{z^*} c_{1,0} g(z + \sum_{i \in A} \boldsymbol{\beta}_i^T f_i(x_i) + \tau) h(z) dz \\
&\quad+ \int_{z^*}^{\infty} c_{1,1} g(z + \sum_{i \in A} \boldsymbol{\beta}_i^T f_i(x_i) + \tau) h(z) dz \\
&= c_{1,0} \int_{- \infty}^{z^*}  g(z + \sum_{i \in A} \boldsymbol{\beta}_i^T f_i(x_i) + \tau) h(z) dz \, ,
\end{align*}
where we used that $c_{1,1} = 0$.
Analogously, we have 
\begin{align*}
& \E_{\mathbf{x}_{S}} \big[  c_{0,\delta^*(\mathbf{x}_{A \cup S})} p(y=0 | \mathbf{x}_{A \cup S}) | \mathbf{x}_{A} \big] \\
&= \E_{\mathbf{x}_{S}} \big[ c_{0,\delta^*(\mathbf{x}_{A \cup S})} (1- g(\sum_{i \in S} \boldsymbol{\beta}_i^T f_i(x_i) + \sum_{i \in A} \boldsymbol{\beta}_i^T f_i(x_i) + \tau)) | \mathbf{x}_{A} \big] \\
&= \E_z \big[ c_{0,\delta^*(z, z^*)}  (1- g(z + \sum_{i \in A} \boldsymbol{\beta}_i^T f_i(x_i) + \tau))  | \mathbf{x}_{A} \big] \\
&= c_{0,0} \int_{- \infty}^{z^*}  (1- g(z + \sum_{i \in A} \boldsymbol{\beta}_i^T f_i(x_i) + \tau)) h(z) dz  \\
&\quad + c_{0,1} \int_{z^*}^{\infty} (1 - g(z + \sum_{i \in A} \boldsymbol{\beta}_i^T f_i(x_i) + \tau)) h(z) dz \\
&= c_{0,1} \int_{z^*}^{\infty} (1 - g(z + \sum_{i \in A} \boldsymbol{\beta}_i^T f_i(x_i) + \tau)) h(z) dz \\
&= c_{0,1} \int_{z^*}^{\infty} h(z) dz  -  c_{0,1} \int_{z^*}^{\infty} g(z + \sum_{i \in A} \boldsymbol{\beta}_i^T f_i(x_i) + \tau) h(z) dz \, .
\end{align*}
Thus the remaining task is to evaluate the following integral
\begin{align} 
&\int_{a'}^{b'}  g(z + \sum_{i \in A} \boldsymbol{\beta}_i^T f_i(x_i) + \tau) h(z) dz \nonumber \\
&= \int_{a' + \sum_{i \in A} \boldsymbol{\beta}_i^T f_i(x_i) + \tau}^{b' + \sum_{i \in A} \boldsymbol{\beta}_i^T f_i(x_i) + \tau}  g(u) h(u - \sum_{i \in A} \boldsymbol{\beta}_i^T f_i(x_i) - \tau) du \, . \label{eq:dirtyNotationIntegral} 
\end{align}
We assume that $h(z) = p(z | \mathbf{x}_{A})$ can be well approximated by a normal distribution with mean $\mu_z$ and variance $\sigma^2$. We defer the explanation of how to estimate $\mu_z$ and $\sigma^2$ to Section \ref{sec:estimationConditionalProbZ}.

The integral in Equation \eqref{eq:dirtyNotationIntegral} has no analytic solution. One popular strategy is to approximate the sigmoid function $g$ by the cumulative distribution function of the standard normal distribution $\Phi$, as in Gaussian process classification \citep{rasmussen2006gaussian}. However, it turns out that this approximation is not applicable here, since $a'$ or $b'$ is a finite real number in our case. Instead, we use here the fact that the sigmoid function can be well approximated with only a few number of linear functions.
In order to facilitate notation, let us introduce the following constants:
\begin{align*}
a &:= a' + \sum_{i \in A} \boldsymbol{\beta}_i^T f_i(x_i) + \tau \, , \\
b &:= b' + \sum_{i \in A} \boldsymbol{\beta}_i^T f_i(x_i) + \tau \, , \\
\mu &:=  \mu_z + \sum_{i \in A} \boldsymbol{\beta}_i^T f_i(x_i) + \tau \, .
\end{align*}
Then we can write the integral in Equation \eqref{eq:dirtyNotationIntegral} as
\begin{align} \label{eq:niceNotationIntegral}
\int_{a}^{b}  g(u) \frac{1}{\sqrt{2 \pi \sigma^2}} e^{- \frac{1}{2\sigma^2} (u - \mu)^2} du \, .
\end{align}

Let us define the following piece-wise linear approximation of the sigmoid function:
\begin{align*}
g(u) \approx \sum_{t = 1}^{\xi + 2} \Big( 1_{[b_{t-1}, b_t]}(u) \big( m_t u + v_t \big) \Big) \, ,
\end{align*}
where for $1 \leq t \leq \xi + 1$, we set $b_{t} := -10 + \frac{20}{\xi} (t - 1) \, ,$ 
and for $1 \leq t \leq \xi$, we set
\begin{align*}
m_{t+1} :=  \frac{g(b_{t+1}) - g(b_t)}{b_{t+1} - b_t} \, , \qquad v_{t+1} := g(b_t) - m_{t+1} b_t \, , 
\end{align*}
and
\begin{align*}
& b_0 := - \infty \, , \, m_1 := 0 \, , \, v_1 := g(b_1)  \, , \\
& b_{\xi + 2} := + \infty \, , \, m_{\xi + 2} := 0 \, , \, v_{\xi + 2} := g(b_{\xi + 1}) \, ,
\end{align*}
and $\xi$ is the number of linear approximations, which is, for example, set to $40$. A comparison with the approximation $\Phi(\sqrt{\frac{\pi}{8}} u)$ is shown in Figure \ref{fig:sigmoidApproxComparison}.
That means for a relatively few number of linear approximations, we can achieve an approximation that is more accurate than the $\Phi$-approximation.
More importantly, as we show below, this allows for a tractable calculation of the integral in Equation \eqref{eq:niceNotationIntegral}, which is \emph{not} the case when using the $\Phi$-approximation.
Then we have
\begin{align*}
&\int_{a}^{b}  g(u) \frac{1}{\sqrt{2 \pi \sigma^2}} e^{- \frac{1}{2\sigma^2} (u - \mu)^2} du \\
&= \int_{a}^{b} \sum_{t = 1}^{\xi + 2} \Big( 1_{[b_{t-1}, b_t]}(u) \big( m_t u + v_t \big) \Big) \frac{1}{\sqrt{2 \pi \sigma^2}} e^{- \frac{1}{2\sigma^2} (u - \mu)^2} du \\
&= \sum_{t = 1}^{\xi + 2} m_t \int_{\max(a, b_{t-1})}^{\min(b, b_t)}  u \frac{1}{\sqrt{2 \pi \sigma^2}} e^{- \frac{1}{2\sigma^2} (u - \mu)^2} du  \\
&\quad+ v_t \Phi_{\max(a, b_{t-1})}^{\min(b, b_t)} \, ,
\end{align*}
where we define $\Phi_l^o := \int_l^o \frac{1}{\sqrt{2 \pi \sigma^2}} e^{- \frac{1}{2\sigma^2} (u - \mu)^2} du$, which can be well approximated with standard numerical libraries.
The remaining integral can also be expressed by $\Phi$ using the substitution $u - \mu := r$, we have 
\begin{align*}
&\int_{l}^{o}  u \frac{1}{\sqrt{2 \pi \sigma^2}} e^{- \frac{1}{2\sigma^2} (u - \mu)^2} du \\
&= \int_{l - \mu}^{o - \mu} r \frac{1}{\sqrt{2 \pi \sigma^2}} e^{- \frac{1}{2\sigma^2} r^2} dr + \mu \int_{l - \mu}^{o - \mu} \frac{1}{\sqrt{2 \pi \sigma^2}} e^{- \frac{1}{2\sigma^2} r^2} dr \\
&= \frac{\sigma}{\sqrt{2 \pi}} \big( e^{- \frac{1}{2\sigma^2} (l - \mu)^2} - e^{- \frac{1}{2\sigma^2} (o - \mu)^2} \big)  + \mu \Phi_{l}^{o} \, .
\end{align*}

\begin{figure}[t]
	\centering
	\includegraphics[scale=0.55,page=1]{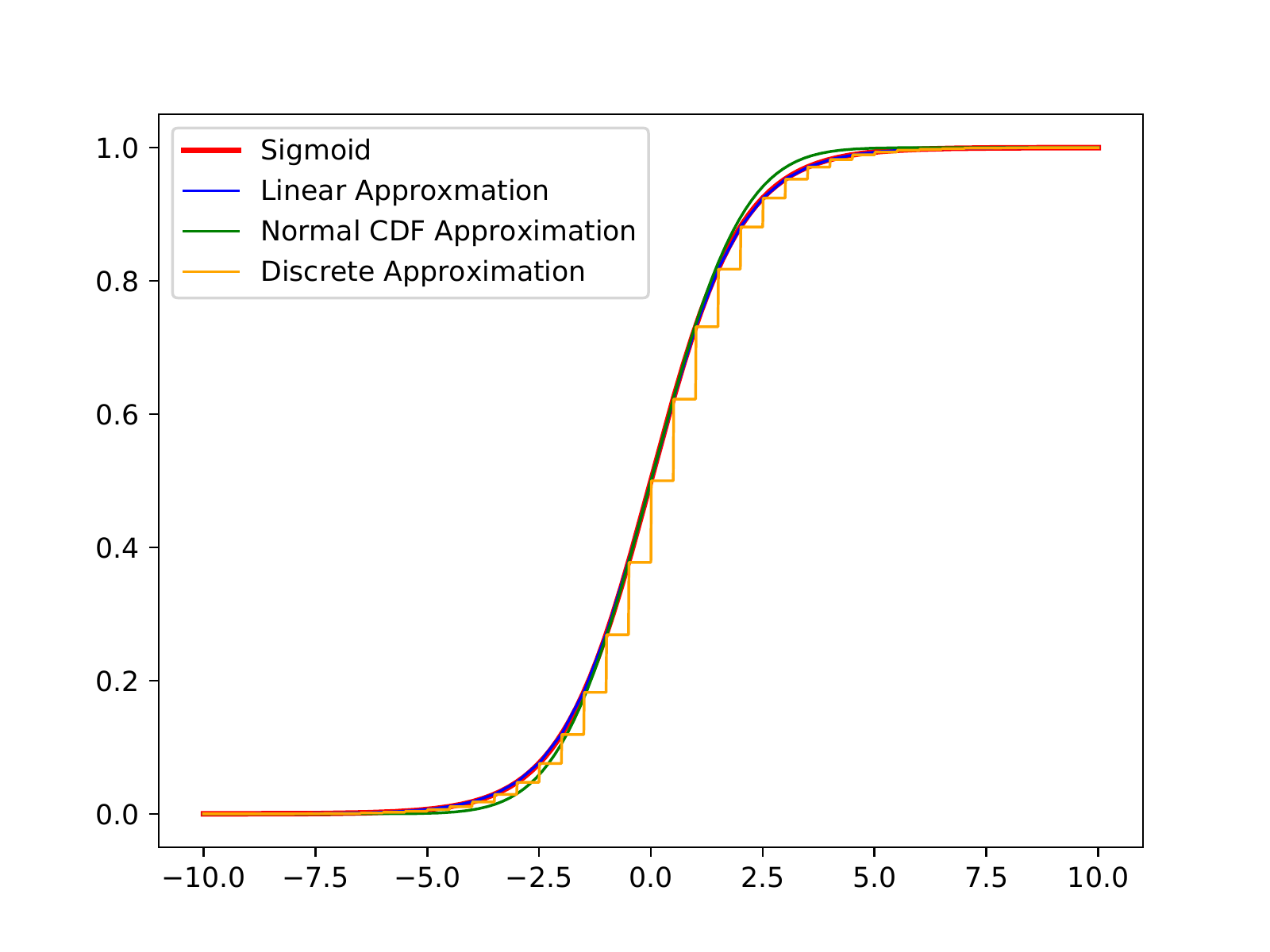}
	\caption{Comparison of Sigmoid function approximations. For the linear function approximation and the discrete bin approximation \citep{ji2007cost} we set $\xi = 40$. For the normal CDF approximation we use $\Phi(\sqrt{\frac{\pi}{8}} u)$.
		\label{fig:sigmoidApproxComparison}}
\end{figure}

\subsubsection{Estimation of $\mu_z$ and $\sigma^2$} \label{sec:estimationConditionalProbZ}
Recall that we assumed that $p(z | \mathbf{x}_{A})$ can be well approximated by a normal density with mean $\mu_z$ and variance $\sigma^2$. 
In order to estimate $\mu_z$ and $\sigma^2$, we propose to model $z$ given $\mathbf{x}_{A}$ as a regression problem with additive noise, where
$z$ is the response variable, and $\mathbf{x}_{A}$ are the  explanatory variables. 
In detail, for learning the regression model from the training data $\{\mathbf{x}^{(k)} \}_{k = 1}^{n}$, we prepare a 
collection of response and explanatory variable pairs of the form $\{ (z^{(k)}, \mathbf{x}_A^{(k)}) \}_{k = 1}^{n}$, where $z^{(k)} = \sum_{i \in S} \boldsymbol{\beta}_i^T f_i(x_i^{(k)})$.
We note that for training the regression model, we do not require the class label $y$. As a consequence, additional to the class-labeled training data, we could also exploit unlabeled training data (if available).\footnote{Though, for the medical datasets we consider in Section \ref{sec:experiments}, no unlabeled training data was available.}

For our experiments, we use a standard Bayesian linear regression model with a scaled inverse $\chi^2$ distribution prior on the noise variance \citep{gelman2013bayesian}.

\section{Cost-aware non-linear covariate selection} \label{sec:covariateSetSelection}

In the previous section, we assumed that the covariates are acquired in a specific sequence. 
In this section, we discuss two different approximation strategies for finding the optimal sequence of subsets $\mathfrak{S} = \{S_1, S_2, \ldots S_q\}$, where $S_1 \subset S_2 \subset S_3 \ldots S_q \subseteq V$, such that the expected total cost of classification tends to minimal for a set $S \in \mathfrak{S}$.

\subsection{Forward Selection} \label{sec:forwardSelection}

We suggest to set $q := p + 1$, and use greedy forward selection as outlined in Algorithm \ref{alg:subsetSelectionSearch}.

\begin{algorithm}
	{\bf Input:} $\{ (y^{(k)}, \mathbf{x}^{(k)}) \}_{k = 1}^{n}$ \\ 
	$S_1 := \emptyset$ \\
	\For{$i \in \{1,\ldots, p \}$}{
		$S_{i+1} := \argmin_{j \in V \setminus S_i} \E_{\mathbf{x}_{S_i}} [F_{\mathbf{x}_{S_i}} ( \{ j \} )] $ \\
	}
	\caption{Forward selection for finding subsets $S_1 \subset S_2, \ldots S_{p+1} \subseteq V$. 
		\label{alg:subsetSelectionSearch}}
\end{algorithm}


Note that from the definition in Equation \ref{eq:definitionF}, we have
\begin{align*} 
\E_{\mathbf{x}_{S}} [F_{\mathbf{x}_{S}} ( \{ j \} )]  
= \E_{\mathbf{x}_{S}} \big[ \E_{{x_j}, y} [ c_{y,\delta^*(\mathbf{x}_{S \cup \{j\}})}  | \mathbf{x}_{S} ] \big] + \sum_{i \in S} c_i \, .
\end{align*}

We estimate $\E_{\mathbf{x}_{S}} [F_{\mathbf{x}_{S}} ( \{ j \} )]$ using 10-fold cross-validation of the labeled training data $\{ (y^{(k)}, \mathbf{x}^{(k)}) \}_{k = 1}^{n}$. 
In particular, for one fold $\mathscr{A}_f \subseteq \{1, \ldots, n\}$, we have
\begin{align*} 
&\E_{\mathbf{x}_{S}} \big[ \E_{{x_j}, y} [ c_{y,\delta^*(\mathbf{x}_{S \cup \{j\}})}  | \mathbf{x}_{S} ]\big] \\
&= \E_{\mathbf{x}_{S \cup \{j\}}, y} [ c_{y,\delta^*(\mathbf{x}_{S \cup \{j\}})} ] \nonumber \\
&\approx \frac{1}{| \mathscr{A}_f |} \sum_{k \in \mathscr{A}_f}  c_{y^{(k)},\delta_f(\mathbf{x}_{S \cup \{j\}}^{(k)})}\, ,
\end{align*}
where the model for the conditional probability $p(y | \mathbf{x}_{S \cup \{j\}})$ used by 
$\delta_f$ is trained using the samples in $\{1, \ldots, n\} \setminus \mathscr{A}_f$.
The final estimate is acquired by averaging over all folds.

An advantage of the above forward-selection procedure is that it uses an unbiased estimate of $\E_{\mathbf{x}_{S}} [F_{\mathbf{x}_{S}} ( \{ j \} )]$, and assuming the variance is not too large, we can expect to find a good local minima.

However, there are several disadvantages.
First, if the variance of the estimator is high, we might get stuck in a bad local minima. 
Second, the forward-selection procedure is extremely computationally expensive, and, as a consequence, unfeasible if $p$ is large.
Finally, a more subtle disadvantage is that it requires the full specification of the misclassification costs, i.e. the specification of $c_{0,1}$ and $c_{1,0}$. As a consequence, it is not applicable when we are provided only with $c_{0,1}$, which we will discuss in Section \ref{sec:recallGuarantee}.

\subsection{Group Lasso Penalty} \label{sec:groupLasso}

Some of the disadvantages of the feed-forward selection method can be overcome by jointly training the model for the conditional probability $p(y | \mathbf{x})$ with a sparsity-enforcing penalty on the regression coefficients. Here, this is possible since we assume a generalized additive model for $p(y | \mathbf{x})$.

In particular, we propose to acquire the sets $S_1 \subset S_2 \subset S_3 \ldots S_q$ by using the search path of a penalized logistic loss function. 
In detail, for different values of $\lambda$, we solve the following convex optimization problem
\begin{align} \label{eq:nonLinearCovariateSelection}
\minimize_{\boldsymbol{\beta}, \tau} - \sum_{k = 1}^n \log p(y = y^{(k)} | \mathbf{x}^{(k)}, \boldsymbol{\beta}, \tau)  + \lambda \sum_{i = 1}^p c_i ||\boldsymbol{\beta}_i||_2 \, ,
\end{align}
where $\boldsymbol{\beta} = (\boldsymbol{\beta}_1, \boldsymbol{\beta}_2, \ldots, \boldsymbol{\beta}_p) \in \mathbb{R}^{s \cdot p}$.
The group lasso penalty ensures that the regression coefficients $\boldsymbol{\beta}_i$  are either all zero or all non-zero \citep{hastie2015statistical}. 
Note that in Equation \eqref{eq:nonLinearCovariateSelection} each group is scaled by $c_i$ which ensures that the regression coefficient of covariates with high cost are penalized more.\footnote{In order to make this type of penalty meaningful we ensure that each $f_i(x_i)$ has mean 0 and standard deviation 1.} As a consequence, in order to be included into the final model, covariates with high cost are required to lower the negative log-likelihood term more than covariates with low costs. 
By inspecting the search path for different values of $\lambda_1 > \lambda_2 > \ldots > \lambda_q$, we acquire the sets $S_1 \subset S_2 \subset S_3 \ldots S_q \subseteq V$.

the conditional class probability $p(y = 1 | \mathbf{x}, \boldsymbol{\beta}, \tau)$ is modeled as 
\begin{align}
p(y = 1 | \mathbf{x}, \boldsymbol{\beta}, \tau) = g(\tau + \sum_{i = 1}^{p} \boldsymbol{\beta}_i^T f_i(x_i)) \, .
\end{align}

The non-linear transformations $f_i: \mathbb{R} \rightarrow \mathbb{R}^s$, where $s$ is the number of splines, are learned from the training data using penalized B splines \citep{hastie2009elements}.


\section{Extension to Classification With Recall Guarantees} \label{sec:recallGuarantee}

So far, we assumed that both misclassification costs $c_{0, 1}$ and $c_{1, 0}$ are given. 
Arguably, the false positive cost $c_{0, 1}$ is relatively easy to specify. For example, in the medical domain, this might correspond to the price of a medicine which was unnecessarily prescribed to a healthy patient.

On the other hand, the specification of $c_{1, 0}$ is more difficult. For example, specifying the cost of a dead patient (that might have been rescued) is difficult.
Therefore, in the medical domain, it is more common to try to make a guarantee on the recall\footnote{Here we use the terminology from the machine learning literature, though, in the medical literature the term "sensitivity" is more common than "recall".}. In particular, it is common practice to require that the recall is $95\%$ \citep{kanao2009psa}.

In the following, we show how to estimate the false negative cost $c_{1, 0}$, given 
the false positive cost $c_{0,1}$ and the requirement that the recall is larger or equal to some value $r$.

In the following, we denote by $\mathbbm{1}_M$ the indicator function which is 1 if expression $M$ is true and otherwise 0.

Given a distribution over $(y, \mathbf{x})$ such that $\E[1_{y=1}] > 0$, the recall of a decision procedure $\delta$ is defined as:
\begin{align*}
R_\delta &:= \int p(\mathbf{x} | y = 1) \cdot \mathbbm{1}_{\delta(\mathbf{x}) = 1}  d \mathbf{x} \, .
\end{align*}

Assuming that $\delta$ is a Bayes procedure, we have 
\begin{align*}
\delta(\mathbf{x}) = 1 &\Leftrightarrow p(y = 1 | \mathbf{x}) \cdot c_{1, 0} \geq p(y = 0 | \mathbf{x}) \cdot c_{0, 1}  \\
&\Leftrightarrow p(y = 1 | \mathbf{x}) \cdot c_{1, 0} \geq (1 - p(y = 1 | \mathbf{x})) \cdot c_{0, 1} \\
&\Leftrightarrow p(y = 1 | \mathbf{x}) \geq \frac{c_{0, 1}}{c_{1, 0} + c_{0, 1}} \, .
\end{align*}
Setting 
\begin{align*}
t := \frac{c_{0, 1}}{c_{1, 0} + c_{0, 1}} \, ,
\end{align*}
we have that 
\begin{align*}
R_\delta &= \int p(\mathbf{x} | y = 1) \cdot \mathbbm{1}_{p(y = 1 | \mathbf{x}) \geq t}  d \mathbf{x} \, .
\end{align*}
In order to emphasize the dependence on $t$, we write in the following $R_t$ instead of $R_\delta$.
In particular, let $t^*$ be chosen such that 
\begin{align} \label{eq:recallEquation}
r = R_{t^*}
\end{align}
where $r$ is, for example, $0.95$. Then the implicitly defined cost $c_{1, 0}$ is given by
\begin{align} \label{eq:falseNegativeCostFromThreshold}
c_{1, 0} = \frac{1- t^*}{t^*} c_{0, 1} \, .
\end{align}
It remains to show how $t^*$ can be estimated.
In general, Equation \eqref{eq:recallEquation} does not have a solution (in terms of $t$). We therefore solve the following problem
\begin{align} \label{eq:guarantedRecalProblem}
& \max_t \, t \; , \, \text{subject to} \; \; r \leq R_t \, , 
\end{align}
which has always a solution (since $t = 0$ fulfills the constraint).
Since $p(\mathbf{x} | y = 1)$ is unknown, we use the empirical training data distribution to estimate $R_t$:
\begin{align} \label{eq:empericalRecallEstimate}
R_t &= \int p(\mathbf{x} | y = 1) \cdot \mathbbm{1}_{p(y = 1 | \mathbf{x}) \geq t}  d \mathbf{x} \nonumber \\ 
&\approx \frac{1}{n_1} \sum_{k : y^{(k)} = 1}  \mathbbm{1}_{p^{(-k)}(y = 1 | \mathbf{x}^{(k)}) \geq t} \, , 
\end{align}
where $n_1$ is the number of true samples (i.e. label $y = 1$), and $p^{(-k)}(y = 1 | \mathbf{x}^{(k)})$ is the estimate of $p(y = 1 | \mathbf{x}^{(k)})$ of the classifier that was trained without sample $k$. In practice, since this type of leave-one-out estimation is computationally expensive, we use instead 10-fold cross-validation.

Since the expression in Equation \eqref{eq:empericalRecallEstimate} 
is a monotone decreasing step function in $t$, we can easily solve the problem in \eqref{eq:guarantedRecalProblem} by sorting \\
$\big\{ p^{(-k)}(y = 1 | \mathbf{x}^{(k)}) \big\}_{k=1}^{n_1}$ in decreasing order. 

\subsection{Adaptive Covariate Acquisition With Recall Guarantees} \label{sec:recallGuaranteeAdaptive}

So far, we discussed how to estimate $c_{1,0}$ in the situation where only one classifier based on $p(y = 1 | \mathbf{x})$ is used. However, in the adaptive acquisition setting, the situation is more complicated since, in general, for different observed sets of variables, the conditional class probabilities are different. In particular, let $S \subset S' \subseteq V$, where $V$ is the set of all variables.
Then, in general, we have
\begin{align*}
p(y = 1 | \mathbf{x}_S) \neq p(y = 1 | \mathbf{x}_{S'}) \, ,
\end{align*}
which means that, in general, the optimal threshold $t^*$ which guarantees recall $\geq r$ is different for different sets of observed variables $S$.
Furthermore, in the adaptive setting, estimating the recall using Equation \eqref{eq:empericalRecallEstimate} is not valid anymore since the distribution of the samples, with label $y = 1$, and for which we select the variable set $S'$ is, in general, different from $p(\mathbf{x} | y = 1)$, i.e. $p(\mathbf{x} | y = 1, \text{acquired $S'$}) \neq p(\mathbf{x} | y = 1)$.

Nevertheless, we show in the following that it is possible to define the cost $c_{1,0}$ such that the recall requirement is fulfilled. 
First, let us introduce the following notations.
Let $S_1 \subset S_2 \subset S_3 \ldots S_q$, be the sets of variables that are considered for adaptive variable acquisition, i.e. first we acquire $S_1$, and then we decide whether to classify or whether we acquire additionally the variables in $S_2 \setminus S_1$, and so forth.
Moreover, let $\delta_{t, S}$ be the classifier based on the observed variables $S$ and using threshold $t$, i.e. 
\begin{align*}
\delta_{t, S}(\mathbf{x}_{S}) := 
\begin{cases}
1 & \text{if } p(y = 1 | \mathbf{x}_{S}) \geq t \, , \\
0 & \text{else.}
\end{cases}
\end{align*}
To simplify the notation, we write in the following $\delta_{t, S}$ short for $\delta_{t, S}(\mathbf{x}_{S})$.

Strict control of the recall can be achieved by requiring that 
\begin{align} \label{eq:allOneRequirement}
p(\delta_{t_{1}, S_{1}} = 1, \delta_{t_{2}, S_{2}} = 1, \ldots, \delta_{t_{q}, S_{q}} = 1 | y = 1) \geq r  \, .
\end{align}
This can be seen as follows. Assume that $y = 1$ and any classifier $\delta_{t_{j}, S_{j}}$ outputs label $0$, then an adversarial selection strategy will select this classifier. 
Otherwise, if all classifiers output label $1$, then even an adversarial selection strategy needs to select a classifier $\delta_{t_{j}, S_{j}}$ for which the output is $1$. By the requirement of Inequality \eqref{eq:allOneRequirement}, the latter will happen with probability of at least $r$.

If we require that all thresholds are the same, i.e. 
\begin{align*}
t = t_{1} = t_2 = \ldots = t_q,
\end{align*}
then we can proceed as before.
That means, based on Inequality \eqref{eq:allOneRequirement}, we first calculate $t^*$, and then specify the false negative cost $c_{1,0}$ using Equation \eqref{eq:falseNegativeCostFromThreshold}.
Analogously to before for checking Inequality \eqref{eq:allOneRequirement}, we use the empirical training data estimate:
\begin{align*}
&p(\delta_{t, S_{1}} = 1, \delta_{t, S_{2}} = 1, \ldots, \delta_{t, S_{q}} = 1 | y = 1) \\
&\approx 
\frac{1}{n_1} \sum_{k : y^{(k)} = 1}  \mathbbm{1}_{p^{(-k)}(y = 1 | \mathbf{x}_{S_1}^{(k)}) \geq t} \cdot  \mathbbm{1}_{p^{(-k)}(y = 1 | \mathbf{x}_{S_2}^{(k)}) \geq t} \ldots  \cdot \mathbbm{1}_{p^{(-k)}(y = 1 | \mathbf{x}_{S_q}^{(k)}) \geq t} \, .
\end{align*}

\section{Experiments} \label{sec:experiments}


We evaluate our proposed method on four real datasets from the medical domain
which are frequently used for cost-sensitive classification: 
Pima Diabetes dataset (p = 8, n = 768), the Wisconsin Breast Cancer dataset (p = 10, n = 683), Heart-disease dataset (p = 13, n = 303), and the PhysioNet dataset (p = 30, n = 12000). The first three datasets are all available at the UCI Machine Learning repository\footnote{\url{https://archive.ics.uci.edu/ml/index.html}}, the PhysioNet data is available at \url{https://archive.physionet.org/pn3/challenge/2012/}.

Note that the PhysioNet data \citep{goldberger2000physiobank} contains for each patient several health check measures like cholesterol, taken at different times during their stay in the intensive care unit.
As in \citep{shim2018joint}, for each patient we use the last recorded value of each attribute to predict death ($y = 1$) or survival ($y = 0$). After filtering attributes which are mostly missing, we acquire a data set with 12000 patients and 30 attributes.

For Diabetes and Heart-disease we use the covariate costs as defined in \citep{ji2007cost}, and \citep{turney1994cost}, respectively. For the other datasets, we set the covariate costs uniformly to one. 

Note that the Heart-disease and PhysioNet data contain missing values. For methods which cannot handle missing values (including our proposed methods) we assume that all covariates are jointly distributed according to a multivariate normal distribution, where the covariance matrix is estimated from all samples (including missing values) using the method from \cite{lounici2014high}. 

We compare the proposed method AdaCOS to the following methods:
\paragraph{COS} 
The proposed method but fixing the covariate set $S \in \{S_1, S_2,\ldots, S_q\}$ to the one which minimizes the total costs in expectation, i.e. 
\begin{align*}
\E_{\mathbf{x}_{S}, y} \big[  c_{y,\delta^*(\mathbf{x}_{S})} \big] + \sum_{i \in S} c_i \, ,
\end{align*}
which is estimated using 10-fold crossvalidation as in Section \ref{sec:forwardSelection}.\footnote{In case where the target recall is specified, we first estimate the false negative cost as in Section \ref{sec:recallGuaranteeAdaptive}, and then proceed as before.}
\paragraph{Full Model} 
The logistic generalized additive model which always acquires (and uses) all covariates.
\paragraph{Shim2018}
The cost-sensitive classification method based on deep reinforcement learning as proposed in \citep{shim2018joint}.\footnote{Available at \url{https://github.com/OpenXAIProject/Joint-AFA-Classification}}
\paragraph{GreedyMiser}
The cost-sensitive tree construction method proposed in \citep{xu2012greedy}.\footnote{Available at \url{http://kilian.cs.cornell.edu/code/code.html}}
\paragraph{AdaptGbrt}
The method proposed in \citep{nan2017adaptive}, which requires the specification of a high accuracy classifier for which we use the Full Model.\footnote{Available at \url{https://github.com/fnan/AdaptApprox}}

For all methods we estimate the hyperparameters with 10-fold crossvalidation, except where this is too computationally expensive: for Shim2018 we use the hold-out data split as in their provided implementation, for AdaptGbrt we use 5-fold crossvalidation. 

As evaluation measure, we use the average total cost of classification, defined as 
\begin{align*} 
\text{avg total cost} := \frac{1}{n_t} \sum_{k = 1}^{n_t} \big( c_{y^{(k)}, y^{(k)}_*} + \sum_{i \in S^{(k)}} c_i \big) \, ,
\end{align*}
where $n_t$ is the number of test samples; $y^{(k)}$ and $y^{(k)}_*$ is the $k$-th true test class and predicted test class, respectively; $S^{(k)}$ is the set of covariates that were used by the prediction model for classifying the $k$-th sample.

\subsection{Results}

For each dataset we use 5-fold cross-validation and report mean and standard deviation of the total costs. We evaluate all methods on two settings:
\begin{itemize}
	\item user-specified false positive and false negative misclassification costs.
	\item user-specified false positive misclassification cost and target recall.
\end{itemize}

If not stated otherwise, we use group lasso, as explained in Section \ref{sec:groupLasso}, for acquiring the sets $S_1 \subset S_2, \ldots S_q$.

\subsubsection{User-specified misclassification costs}

In the first setting, we assume that the user specifies the false positive cost in $\{1, 5, 10, 50, 100, 500, 1000\}$. The false negative cost is set to be 10 times the false positive cost, which reflects that it is more important to detect infected patients than avoiding wrongly classifying healthy patients.

The total cost of misclassification is shown in the top plot of
Figures \ref{fig:result_pima_asymmetricCost}, \ref{fig:result_breastcancer_asymmetricCost}, \ref{fig:result_pyhsioNetWithMissing_asymmetricCost}, and  \ref{fig:result_heartDiseaseWithMissing_asymmetricCost} for Diabetes, Breast Cancer, PhysioNet
and Heart-disease, respectively.
We observe that with respect to minimizing the total cost of classification (top plots), our proposed method AdaCOS performs better than all previously proposed methods. 

In each of those figures, in the middle and bottom plot, we also show the weighted accuracy and the number of acquired covariates each plotted against the false positive cost (which is set by the user), respectively. Since we assume that false negative classification have 10 times higher cost than false positive classification, we use the weighted accuracy defined by
\begin{align*}
\text{weighted accuracy} = \frac{\text{true positives $\cdot$ 10 + true negatives}}{\text{number of true labels $\cdot$ 10 + number of false labels}} \, .
\end{align*}

From the bottom plots, as expected, we see that all methods start acquiring more covariates as the user-specified false positive cost increases. 
At the same time, all methods, except Shim2018, show an increase in (weighted) accuracy.
In particular, Shim2018 underperforms on the smaller datasets Diabetes, Breast Cancer, and Heart-disease, which is likely due to the difficulty of adjusting the hyper-parameters of their deep neural network classifier on small hold-out validation data.

In terms of (weighted) accuracy, Full Model performs always optimal, i.e. even for small datasets we do not find any gains in predictive accuracy by using a sparser model. As a conclusion, if the covariate costs are zero or negligible, we might just opt for the full model to get the lowest total costs. On the other hand, if the ratio of false-positive cost to covariate cost is less than around 100, the full model performs considerably worse than the proposed method in terms of total cost.

The (weighted) accuracy of AdaCOS and COS are similar, while the former achieves the same accuracy with less covariates. This demonstrates the effectiveness of estimating the expected cost of misclassification depending on what we have observed so far using the conditional Bayes risk as in Equation \eqref{eq:definitionF}.

\begin{figure*}[t]
	\centering
	\caption{ Results on Pima Diabetes dataset with user-specified false positive cost in $\{1, 5, 10, 50, 100, 500, 1000\}$. The false negative cost is set to be 10 times the false positive cost. \label{fig:result_pima_asymmetricCost}}
	\includegraphics[trim=45 0 0 80,clip=true,scale=0.60,page=1]{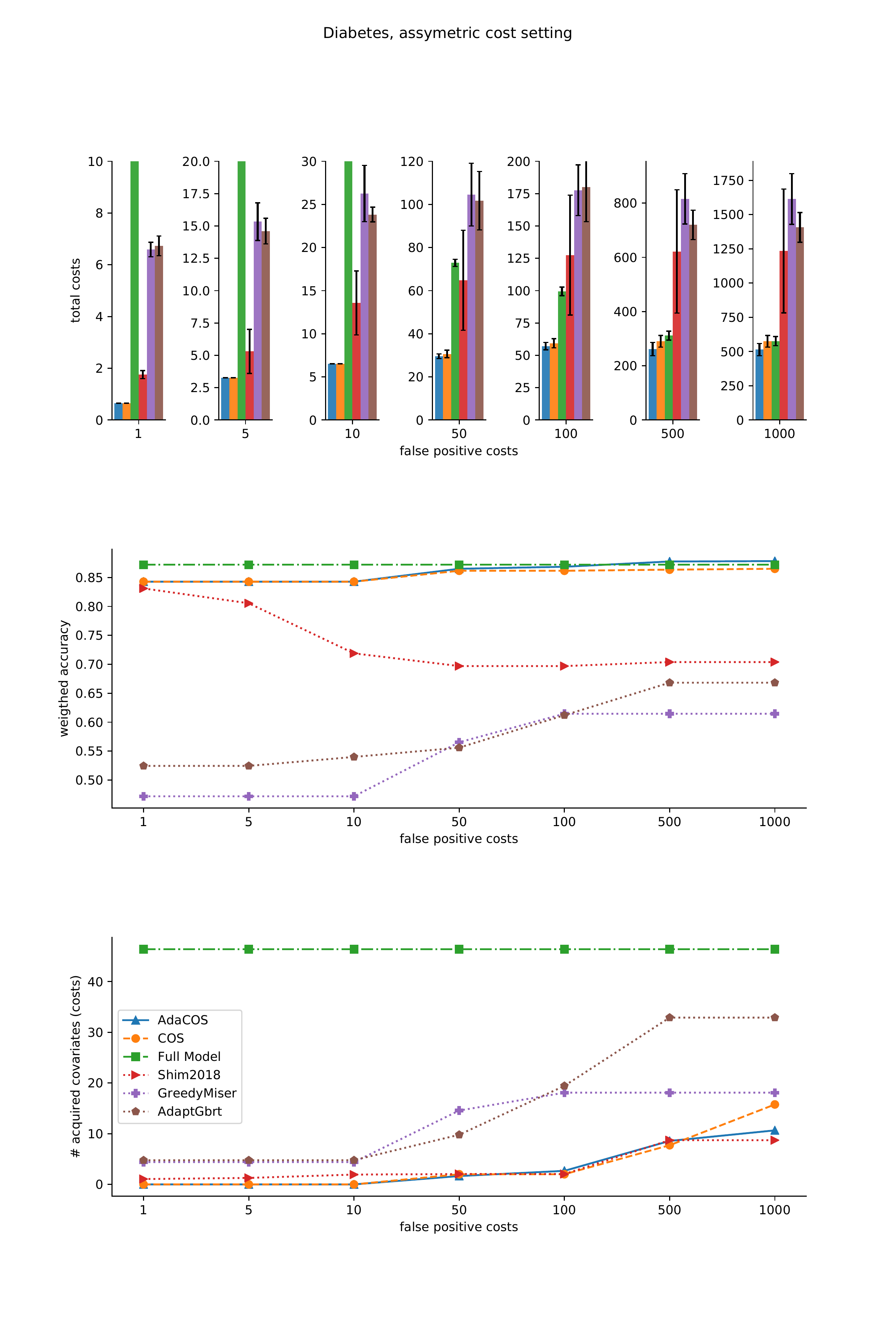}
\end{figure*}

\begin{figure*}[t]
	\centering
	\caption{ Results on Breast Cancer dataset with user-specified false positive cost in $\{1, 5, 10, 50, 100, 500, 1000\}$. The false negative cost is set to be 10 times the false positive cost. \label{fig:result_breastcancer_asymmetricCost}}
	\includegraphics[trim=45 0 0 80,clip=true,scale=0.60,page=1]{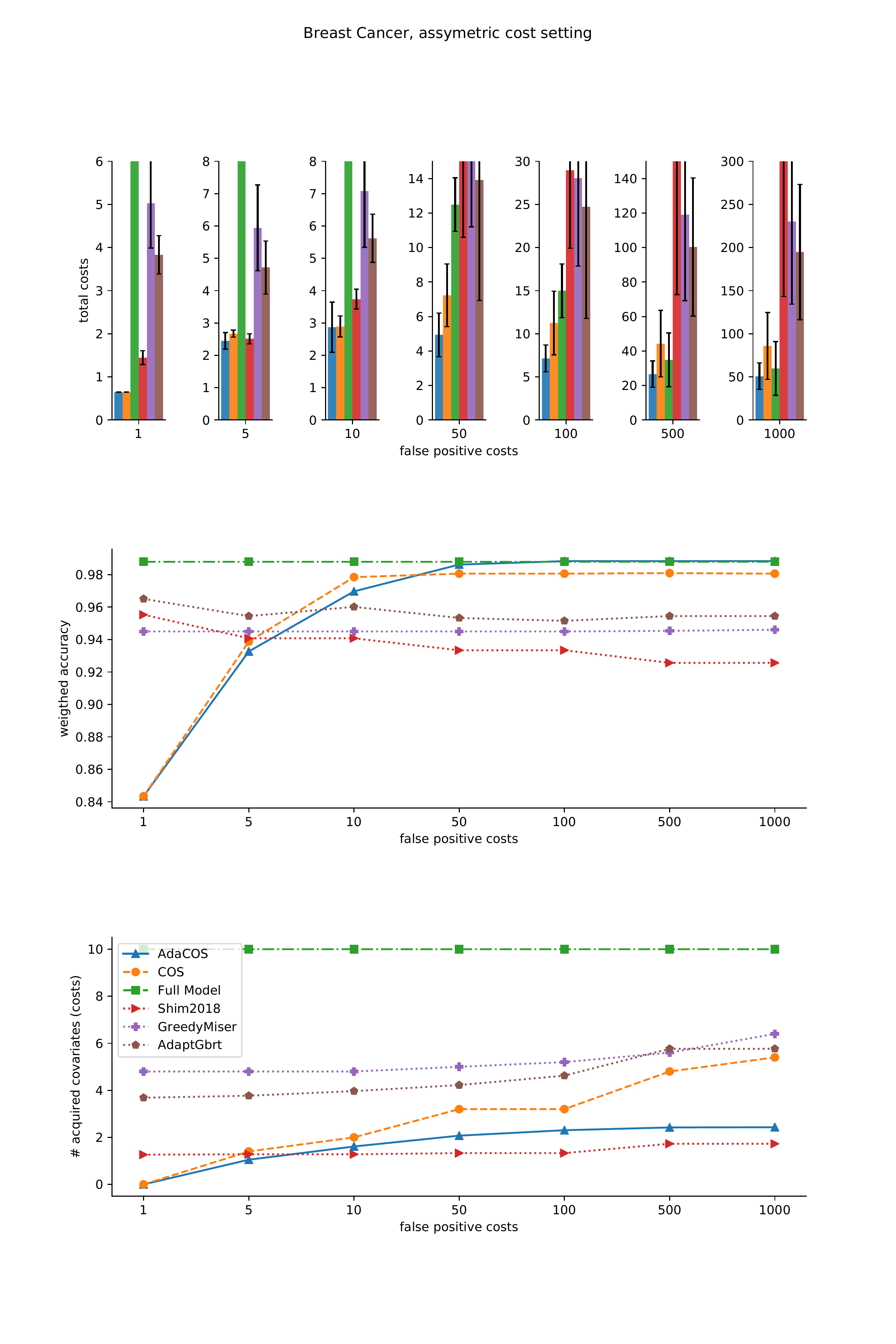}
\end{figure*}

\begin{figure*}[t]
	\centering
	\caption{ Results on PhysioNet dataset with user-specified false positive cost in $\{1, 5, 10, 50, 100, 500, 1000\}$. The false negative cost is set to be 10 times the false positive cost. \label{fig:result_pyhsioNetWithMissing_asymmetricCost}}
	\includegraphics[trim=45 0 0 80,clip=true,scale=0.60,page=1]{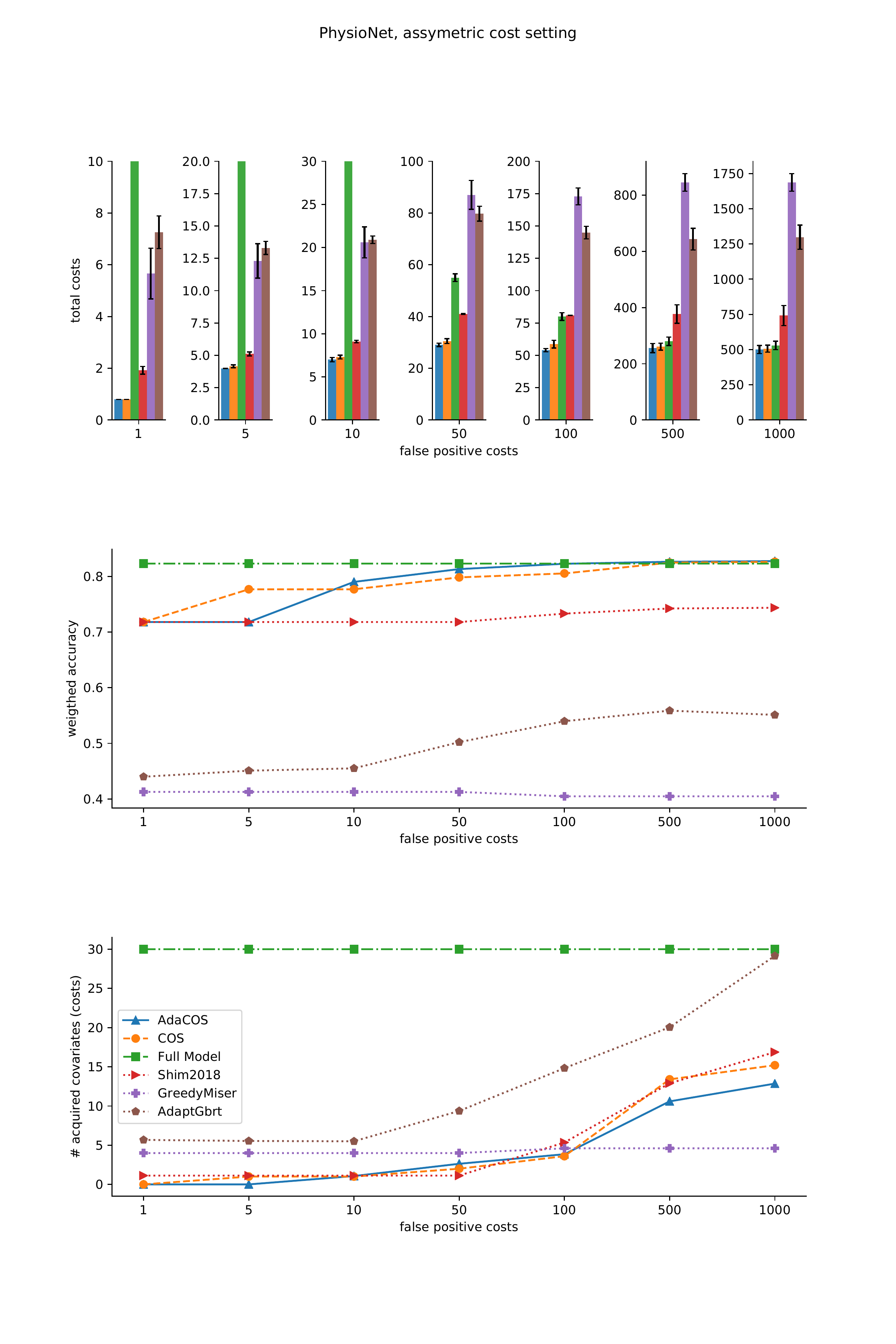}
\end{figure*}

\begin{figure*}[t]
	\centering
	\caption{ Results on Heart-disease dataset with user-specified false positive cost in $\{1, 5, 10, 50, 100, 500, 1000\}$. The false negative cost is set to be 10 times the false positive cost. \label{fig:result_heartDiseaseWithMissing_asymmetricCost}}
	\includegraphics[trim=45 0 0 80,clip=true,scale=0.60,page=1]{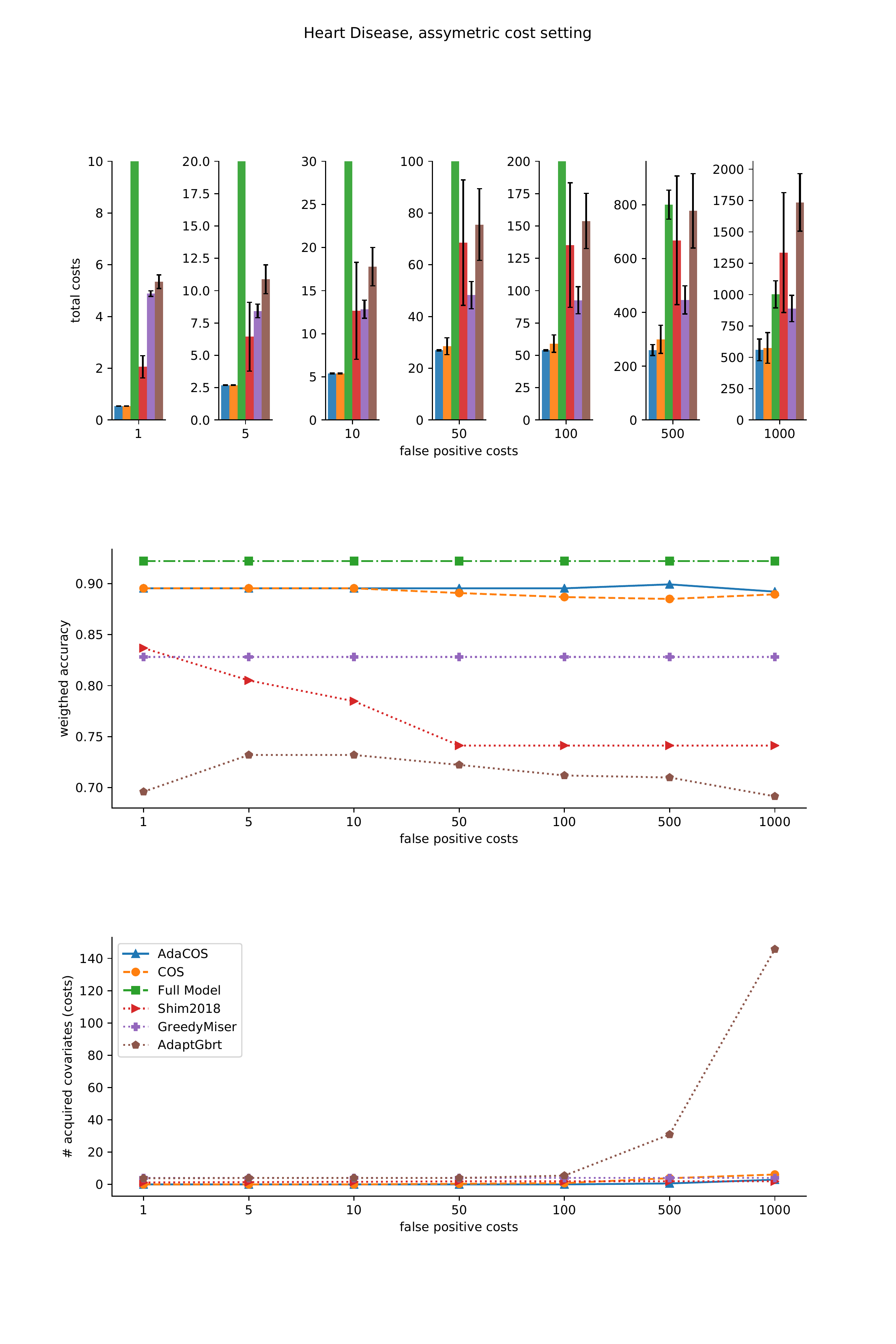}
\end{figure*}

\paragraph{Comparison of Group Lasso and Forward Selection}
Next, we compare the forward selection strategy (Section \ref{sec:forwardSelection}) and group lasso (Section \ref{sec:groupLasso}) when used for acquiring the covariate sets $S_1 \subset S_2, \ldots S_q$. 
For Diabetes, Breast Cancer, and Heart-disease, the total costs of the proposed method AdaCOS with forward selection and group lasso are shown in Figure \ref{fig:result_cmp_l1_and_greedy}. Due to the high computational costs for large $p$, it was not feasible to apply forward selection to the PhysioNet dataset.
We find that, except for the Breast Cancer dataset, both covariate acquisition strategies lead to similar results. For Breast Cancer, forward selection appears to be superior to group lasso. 

\begin{figure*}[t]
	\centering
	\caption{Comparison of the proposed method AdaCOS when defining the covariates sets $S_1, S_2, \ldots S_q$ using either the group lasso penalty or forward selection. \label{fig:result_cmp_l1_and_greedy}}
	\includegraphics[trim=50 0 0 80,clip=true,scale=0.60,page=1]{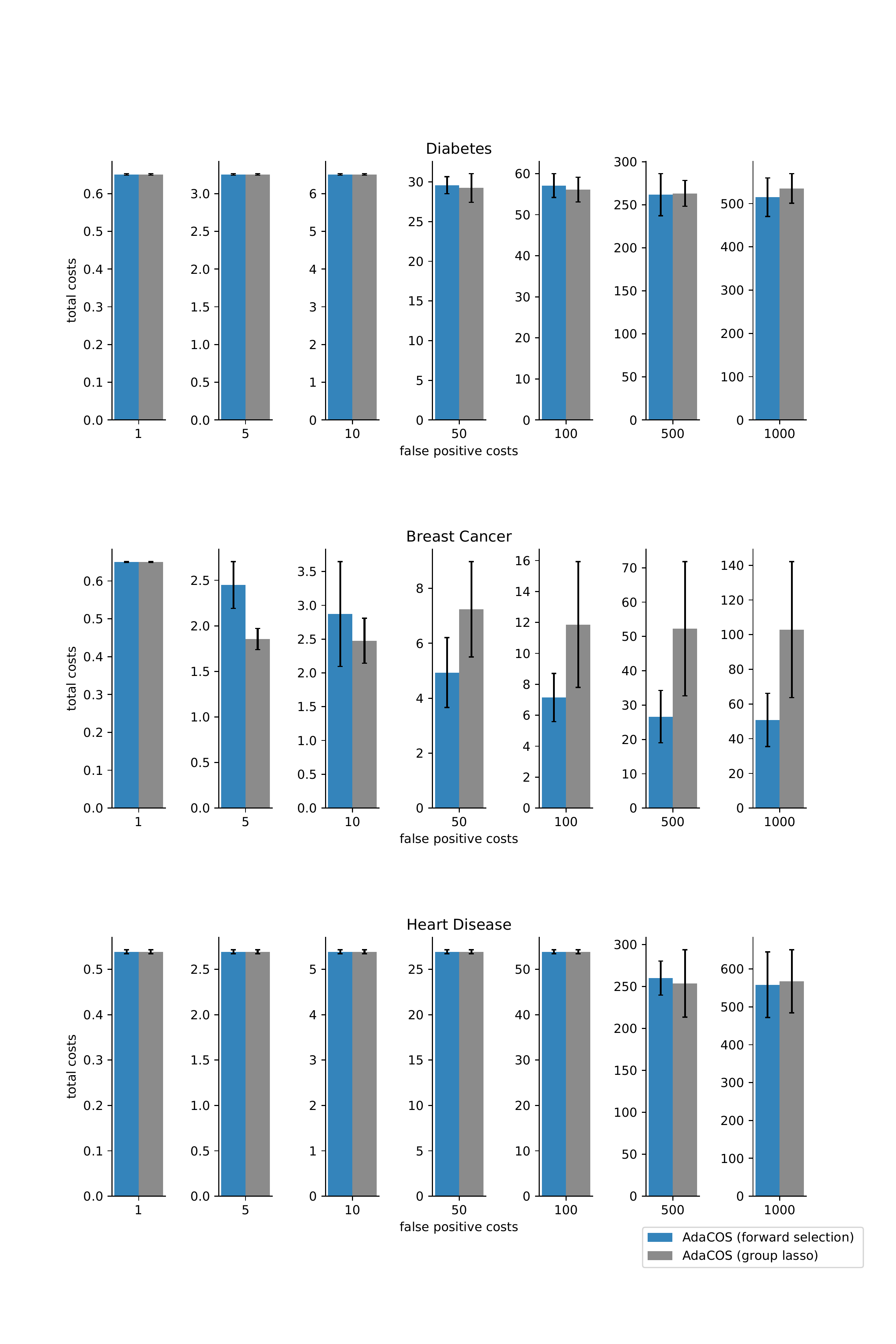}
\end{figure*}

\paragraph{Symmetric misclassification costs on Diabetes dataset}

In order to compare to the results reported in \citep{ji2007cost,dulac2012sequential},
we also evaluate on the Diabetes dataset with symmetric misclassification costs (i.e. false negative and false positive costs are the same), and the cost for correct classification set to $-50$. The results, shown in Table \ref{tab:resultsPimaData_symmetric}, suggest that also in this setting the proposed method can have an advantage over previously proposed methods. In particular, the proposed method AdaCOS with forward selection has the lowest total costs, though, when using group lasso the proposed method underperforms.

\begin{table}
	\caption{Shows the total cost of misclassification under the same cost setting as in \citep{ji2007cost,dulac2012sequential}: user-specified misclassification costs are symmetric (either 400 or 800), cost of correct classification equals $-50$. 
		The results for the methods DWSM and POMDP are taken from \citep{dulac2012sequential} and  \citep{ji2007cost}, respectively.  \label{tab:resultsPimaData_symmetric}}	
	\centering
	\begin{tabular}{lll}
		\\
		& \bf 400 & \bf 800 \\
		\midrule
		\bf AdaCOS (forward selection) & {\bf68.83} (12.43) & {\bf157.35} (24.34) \\
		\bf COS (forward selection) & 70.81 (16.75) & 161.18 (29.58) \\
		\bf AdaCOS (group lasso) & 78.1 (5.96) & 180.64 (14.12) \\
		\bf COS (group lasso) & 82.55 (12.84) & 171.71 (16.1) \\
		\bf Full Model & 99.52 (8.67) & 191.2 (16.38) \\
		\bf Shim2018 & 113.82 (17.87) & 246.71 (26.71) \\
		\bf AdaptGBRT & 87.41 (12.1) & 175.8 (17.76) \\
		\bf GreedyMiser & 91.36 (14.43)  & 200.96 (31.93)  \\
		\bf DWSM & 74.0 (-) & 181.0 (-) \\
		\bf POMDP & 75.0 (-) & 180.0 (-) \\
		\bottomrule
	\end{tabular}
\end{table}

\subsubsection{User-specified target recall}

Finally, we investigate the setting where the user specifies target recall $r$ instead of false negative costs. Here, we show the results for $r = 0.95$, 
the results for target recall $r = 0.99$ are similar and given in the supplement material.

For this setting, we do not consider the method AdaptGbrt, since it does not allow the output of class probabilities or scores.
For Shim2018 and GreedyMiser, we found that simply using the class probabilities/scores from the validation data to learn thresholds with recall $\geq r$, tended to lead to recall less that $r$ on the test data, as shown in Figure \ref{fig:result_onlyRecall95}. 
Therefore, in order to make all results comparable, we show the results for Shim2018 and GreedyMiser at the same recall level as the proposed method AdaCOS.

The recall of the proposed method AdaCOS on the test data, as shown in Figure \ref{fig:result_onlyRecall95}, never violates the target recall of $0.95$.

Since no false negative costs are provided, we cannot evaluate in terms of total costs anymore. Instead, we evaluate in terms of average operation costs, defined as 
the average cost of false positives plus the costs for covariate acquisition:
\begin{align*} 
\text{avg operation costs} := \frac{1}{n_t} \sum_{k = 1}^{n_t} \big( (1 - y^{(k)}) \cdot y^{(k)}_*  \cdot c_{0,1} + \sum_{i \in S^{(k)}} c_i \big) \, .
\end{align*}

The results for all datasets are shown in Figures \ref{fig:result_pima_recall95}, \ref{fig:result_breastcancer_recall95}, \ref{fig:result_pyhsioNetWithMissing_recall95}, and \ref{fig:result_heartDiseaseWithMissing_recall95}.
For completeness, on each of the those figures, we also plot the false discovery rate (FDR) against the covariate costs (bottom plots), where the crosses of different sizes mark the standard deviation of FDR and covariates costs.

For all datasets, except Heart-Disease, the proposed method AdaCOS has the smallest operation costs. Furthermore, AdaCOS tends to achieve a lower false discovery rate with less covariates used. 

\begin{figure*}[t]
	\centering
	\caption{Results on Pima Diabetes dataset with user-specified false positive cost in $\{1, 5, 10, 50, 100, 500, 1000\}$, and target recall $\geq 0.95$. \label{fig:result_pima_recall95}}
	\includegraphics[trim=45 0 0 80,clip=true,scale=0.60,page=1]{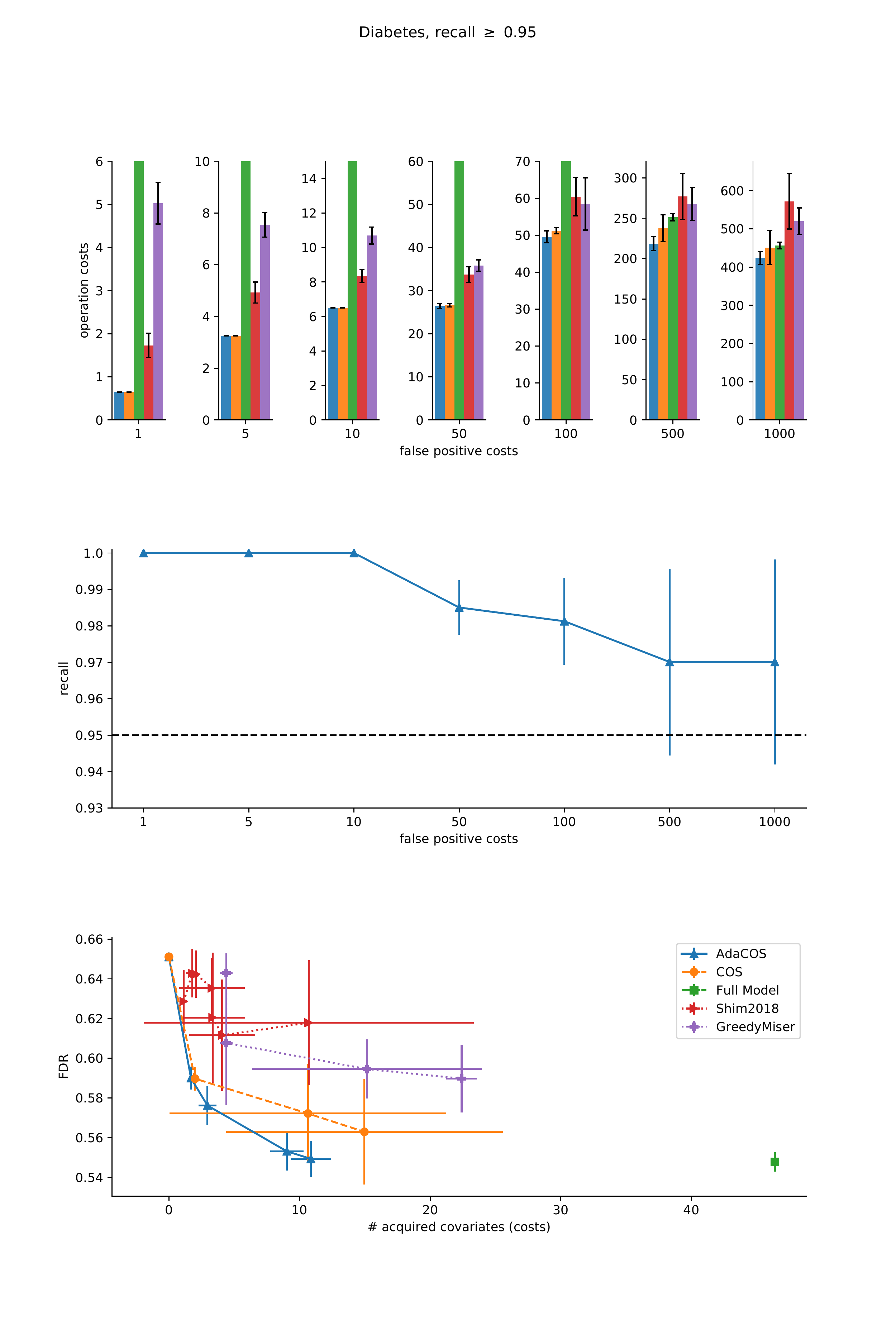}
\end{figure*}

\begin{figure*}[t]
	\centering
	\caption{Results on Breast Cancer dataset with user-specified false positive cost in $\{1, 5, 10, 50, 100, 500, 1000\}$, and target recall $\geq 0.95$. \label{fig:result_breastcancer_recall95}}
	\includegraphics[trim=45 0 0 80,clip=true,scale=0.60,page=1]{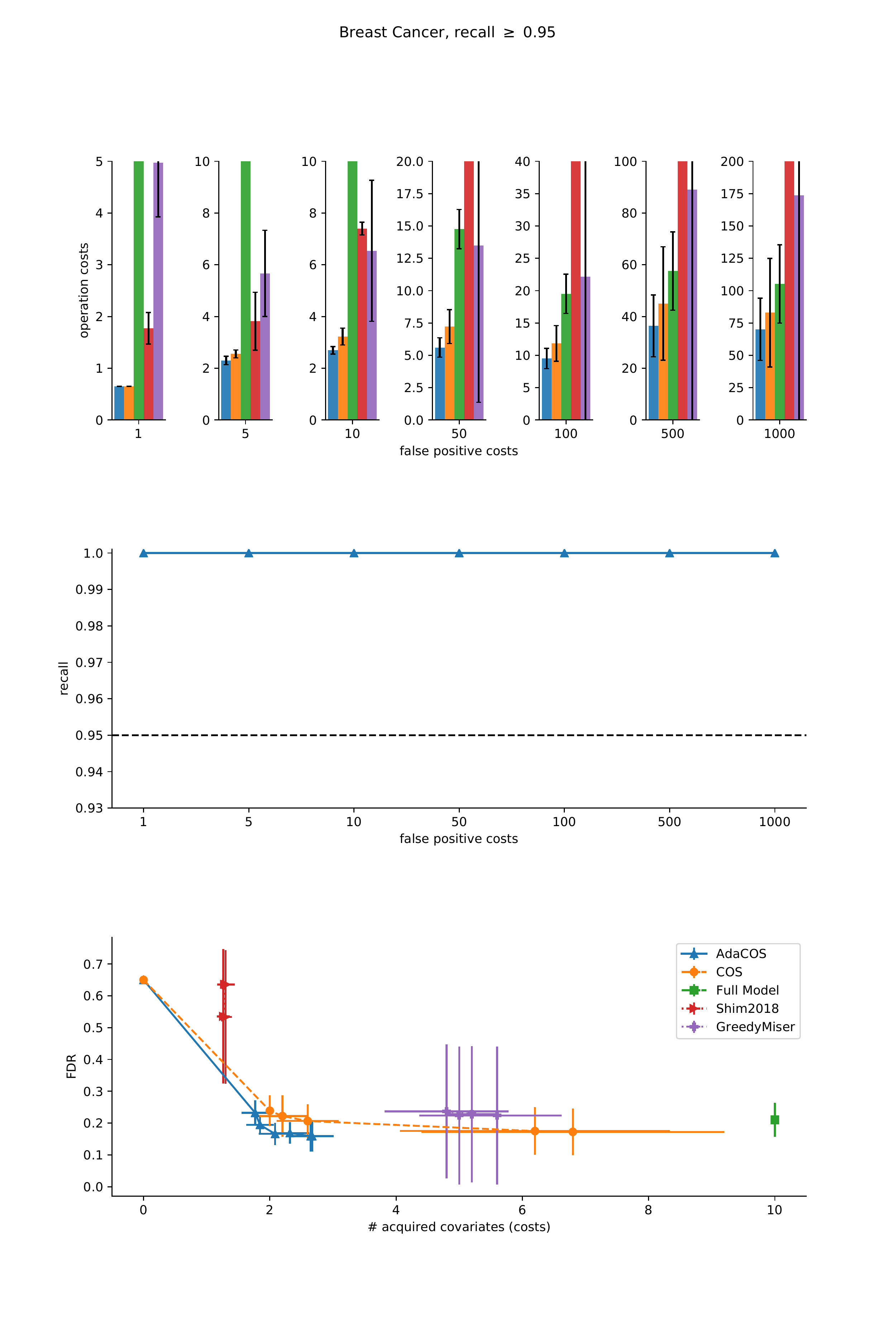}
\end{figure*}

\begin{figure*}[t]
	\centering
	\caption{Results on PhysioNet dataset with user-specified false positive cost in $\{1, 5, 10, 50, 100, 500, 1000\}$, and target recall $\geq 0.95$. \label{fig:result_pyhsioNetWithMissing_recall95}}
	\includegraphics[trim=45 0 0 80,clip=true,scale=0.60,page=1]{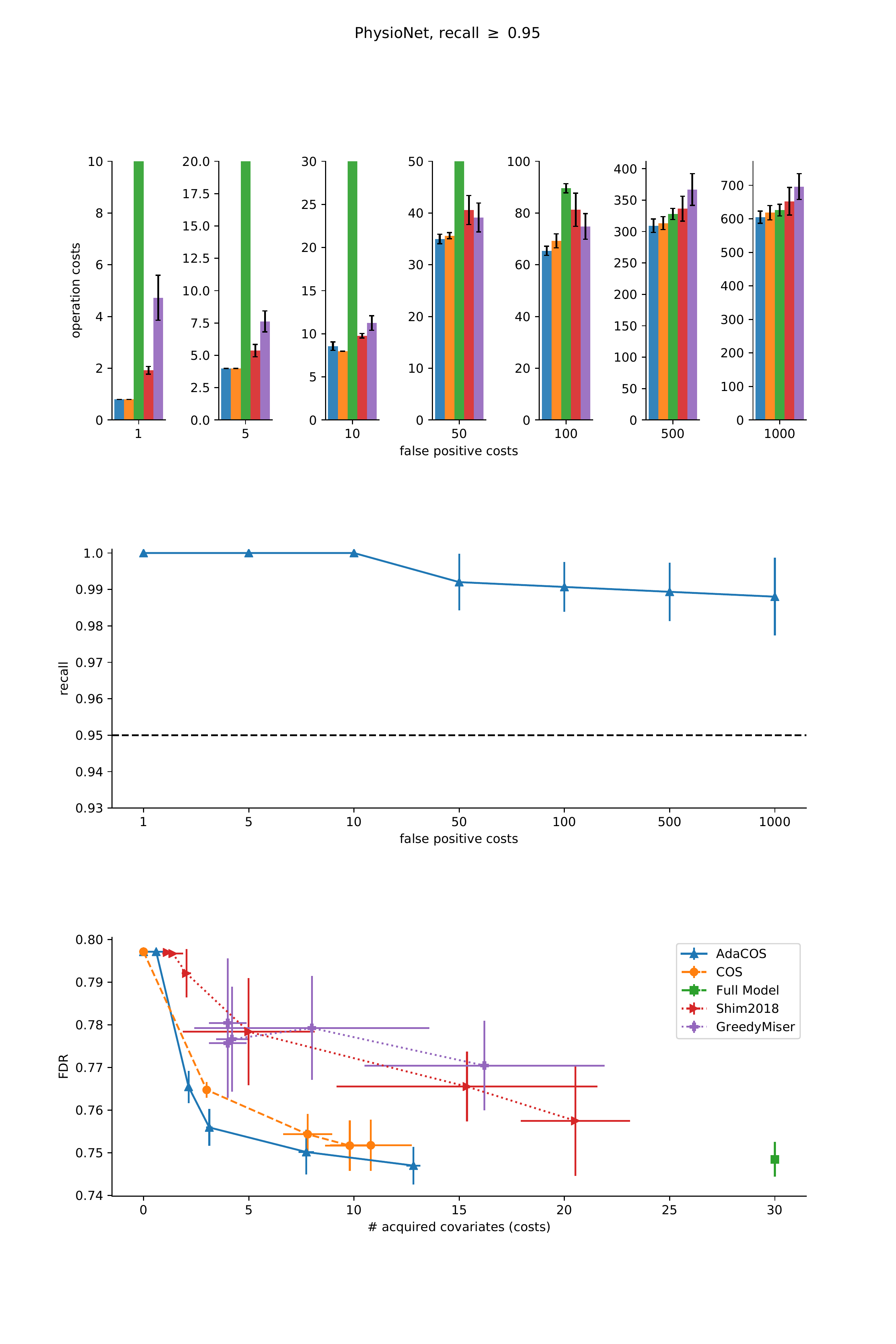}
\end{figure*}

\begin{figure*}[t]
	\centering
	\caption{Results on Heart-disease dataset with user-specified false positive cost in $\{1, 5, 10, 50, 100, 500, 1000\}$, and target recall $\geq 0.95$. \label{fig:result_heartDiseaseWithMissing_recall95}}
	\includegraphics[trim=45 0 0 80,clip=true,scale=0.60,page=1]{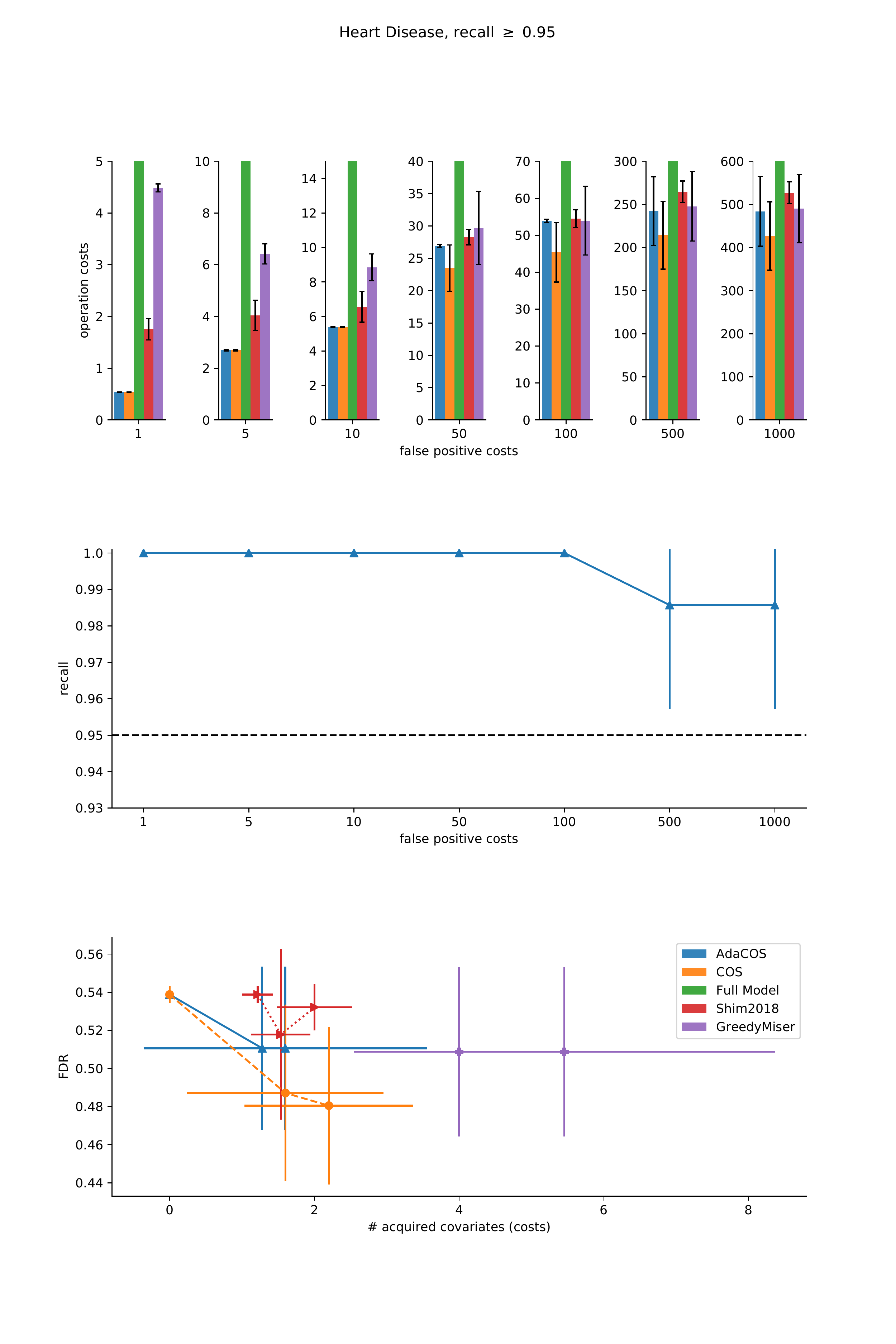}
\end{figure*}

\begin{figure*}[t]
	\centering
	\caption{Actually observed recall on test data when threshold on class probabilities was adjusted on validation data to have recall $\geq 0.95$. \label{fig:result_onlyRecall95}}
	\includegraphics[trim=45 0 0 80,clip=true,scale=0.60,page=1]{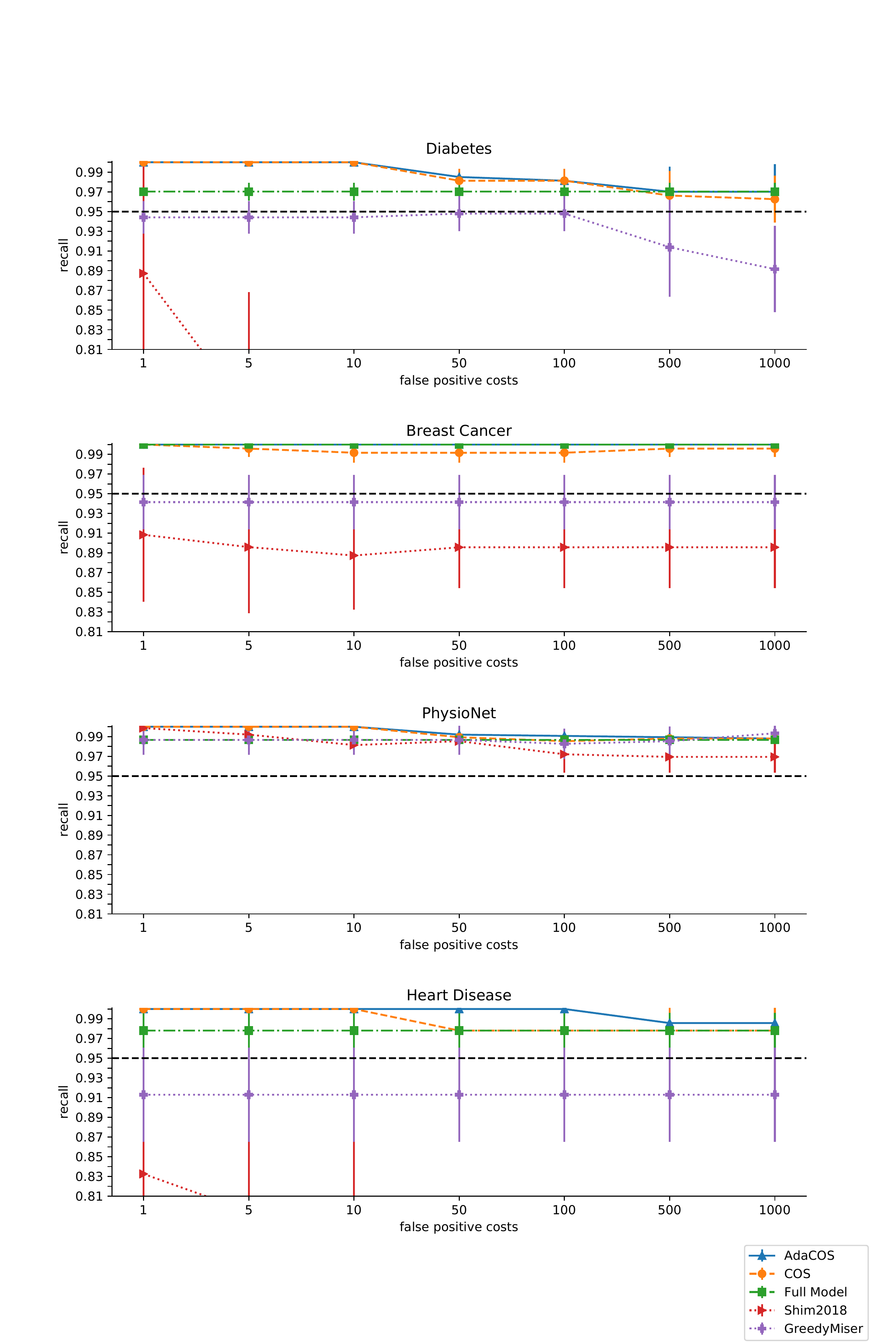}
\end{figure*}

\section{Related Work} \label{sec:relatedWork}
Here, we briefly summarize various previous works for cost-sensitive classification.

\paragraph{Markov Decision Process (MDP) Framework}
The MDP formulation and solution using an action-utility representation (Q-learning) in \citep{zubek2004pruning,bayer2004learning} is closest to our approach. 
Their method also leads to a Bayes procedure. However, they do not provide a formal proof and consider only discrete covariates. 
The work in \citep{dulac2011datum,dulac2012sequential,karayev2013dynamic} also uses the MDP framework.
However, their proposed method cannot incorporate the uncertainty about the covariate distributions.
%
The work in \citep{ji2007cost} tries to model such uncertainties by modeling the cost-sensitive classification problem as a partial observable Markov decision process (POMDP). However, their POMDP formulation can lead to repeatedly selecting the same covariates, and as a consequence they need to adapt the stopping criteria. 

\paragraph{Reinforcement Learning Approaches}
\cite{janisch2017classification,shim2018joint} suggest to use deep reinforcement learning with Q-learning.
In contrast to MDP, a discriminative decision maker is learned which does not require an environmental model. Their method performs promising in the domain where huge amounts of labeled training data is available.
Alternatively, the work in \citep{benbouzid2012fast} suggests the use of SARSA.
The method in \citep{contardo2016sequential} also addresses this problem with reinforcement learning. 


\paragraph{Discriminative Decision Approach}
The work in \citep{wang2015efficient} proposes an intriguing method for finding a decision procedure that is guaranteed to converge to the Bayes risk given sufficient enough training data. 
Their idea is to create a Bayes optimal classifier for all possible subsets of covariates, and a directed a-cyclic graph that connects them. 
They formulate the problem as an empirical risk minimization (ERM) problem, and show that with infinitely many training samples the loss at each node converges to the Bayes risks.  However, in order to allow for scalability their method requires to acquire covariates in batches.
The work in \citep{trapeznikov2013supervised,wang2014lp} uses a similar framework but is restricted to a fixed sequential order.

\paragraph{Cost-sensitive Tree Construction}
The work in \citep{xu2012greedy,nan2015feature,nan2016pruning,nan2017adaptive,peter2017cost} learns a random forest subject to budget constraints on the features. 
In particular, the methods in \citep{nan2017adaptive,peter2017cost} are considered state of the art for this task. Their usage of gradient boosted decision trees \citep{friedman2001greedy} makes them in particular effective for very large training data.
Cost-sensitive decision trees for discrete covariates are also considered in \citep{sheng2006feature}, and extended to Bayesian Networks in \citep{bilgic2007voila}.


\paragraph{Tree of Classifiers}
The work in \citep{kusner2014feature,xu2013cost} proposes to learn a tree of classifiers that minimizes a convex surrogate loss subject to budget constraints. 
\cite{wang2014model} assumes a fixed number of pre-trained classifiers and the goal is to learn a policy that selects one of those classifiers.

\paragraph{Entropy-Based Approaches}
The work in \citep{kanani2008prediction,gao2011active,kapoor2009breaking,gong2019icebreaker} optimizes a criteria that combines the costs of features with an estimate of the class entropy of the resulting classifier. As such their objective function is different from ours. 

\paragraph{Density Estimation via Autoencoders}

The work in \citep{kachuee2018dynamic} suggests to acquire the covariate which has the highest sensitivity to the output prediction $y$. In order to account for different covariate acquisition costs the sensitivity scores are re-scaled appropriately. The sensitivity scores are estimated using a denoising autoencoder. 
Similarly, the work in \citep{ma2019eddi} uses as objective function the expected Shannon information, which is estimated via a variational autoencoder.
Both objective functions are not related to the minimization of the expected total cost.


\paragraph{Others}
The work in \citep{greiner2002learning} extends the Probably Approximately Correct (PAC) framework to prove the existence of a cost-sensitive classifier that is 
with high probability optimal in the sense of providing minimal average total costs. However, they assume a probability distribution over only discrete covariates. 
The method in \citep{lakkaraju2017learning} is additionally focused on interpretability, and, as a consequence, optimizes an objective function that is different from ours.
Imitation learning is also applied to this task by \cite{he2012imitation}, but their definition of loss is different from minimizing the total classification costs that we consider here.
The work in \citep{nan2014fast} assumes a margin-based classifier and uses a k-nearest neighbor approach to estimate the accuracy of the classifier.

\section{Conclusions} \label{sec:conclusions}

In this article, we addressed the problem of cost-sensitive classification where the goal is to minimize the total costs, defined as the expected cost of misclassification plus the cost for covariate acquisition. 

In Section \ref{sec:costRationalSelectionCriteria}, we rigorously formalized this goal as the minimization of the (conditional) Bayes risk which can change after the acquisition of a new covariate. 
However, solving this minimization problem is hard.
First, the evaluation of the conditional Bayes risk requires to estimate and integrate over a high dimensional density. Second, the Bayes risk must be evaluated for all combinations of covariate sets which is exponential in $p$ the number of covariates.

In order to overcome the computational difficulties, we introduced two working assumptions:
\begin{enumerate}
	\item The optimal classifier can be expressed as a generalized additive model (GAM).
	\item The optimal sets of covariates can expressed as a sequence of sets that are monotone increasing, namely $S_1 \subset S_2 \ldots \subset S_q$.
\end{enumerate}

Using the first assumption, we showed, in Section \ref{sec:AdaCOS}, that the evaluation of the conditional Bayes risk reduces to a one dimensional density estimation and integration problem which can be efficiently estimated.

Furthermore, we showed that the sequence $S_1 \subset S_2 \ldots \subset S_q$ can be computationally efficiently acquired by inspecting the regression coefficient path when penalizing GAM with group lasso.

Our experiments suggest that our proposed method AdaCOS achieves in most situations the lowest total costs of classification, when compared to the previous methods POMDP, DWSM, GreedyMiser, AdaptGbrt, and Shim2018 \citep{ji2007cost,dulac2012sequential,xu2012greedy,nan2017adaptive,shim2018joint}.

We note that some previous methods like Shim2018 \citep{shim2018joint} do not share our working assumptions, and instead use a very flexible classifier (deep neural network) and covariate acquisition strategy based on reinforcement learning.
However, for small datasets, and even for medium large datasets like PhysioNet, we found that a generalized additive model is competitive or even better than a neural network classifier,
and the flexibility of the reinforcement learning seems to suffer from high variance. 

Finally, we considered the situation where not all misclassification costs are specified by the user. In particular, we considered the situation where the user specifies a target recall instead of the cost of false negative classification. We showed that it is possible to apply the proposed method by estimating the implicitly defined false negative cost. Our experiments showed that the resulting method indeed achieves the desired minimum recall, while minimizing the false discovery rate and covariate acquisition cost.

The source code of the proposed method and for reproducing all results is available at \url{https://github.com/andrade-stats/AdaCOS_public}.

\section*{Appendix}


Here, we prove Theorem \ref{thm:optDecisionProcedure}, which states that 
the procedure $\pi^*$, as defined in Equation \eqref{eq:optDecisionProcedure}, is a Bayes procedure.
That means we need to show that for any other decision procedure $\pi$ we have
\begin{align*} 
\E_{\mathbf{x}, y} [l( (\mathbf{x}, y), \pi^*) ] \leq \E_{\mathbf{x}, y} [l( (\mathbf{x}, y), \pi) ] \, .
\end{align*}

\begin{proof}
	Let $S \subseteq V$ be the set of already observed covariates, then the expected remaining costs for a decision procedure $\pi$ is given by
	\begin{align*}
	\E_{\mathbf{x}_{V \setminus S}, y} [l( (\mathbf{x}, y), \pi, S) | \mathbf{x}_S ] \, .
	\end{align*}
	We will prove by induction that for any $S \subseteq V$, and any decision procedure $\pi$, we have
	\begin{align*}
	\E_{\mathbf{x}_{V \setminus S}, y} [l( (\mathbf{x}, y), \pi^*, S) | \mathbf{x}_S ] \leq \E_{\mathbf{x}_{V \setminus S}, y} [l( (\mathbf{x}, y), \pi, S) | \mathbf{x}_S ] \, .
	\end{align*}
	The claim then follows by setting $S := \emptyset$.
	\item
	\paragraph{Base case: $S = V$. }
	We have 
	\begin{align*} 
	\pi^*(\mathbf{x}_S) = \argmin_{i \in L} \E_y [c_{y,i} |  \mathbf{x}_S] \, .
	\end{align*}
	Therefore $\pi^*(\mathbf{x}_S)$ is a Bayes procedure, and as a consequence 
	\begin{align*} 
	\E_{y} [c_{y,\pi^*(\mathbf{x}_S)} | \mathbf{x}_S ] \leq \E_y [c_{y,\pi(\mathbf{x}_S)} |  \mathbf{x}_S]  \, .
	\end{align*}
	And therefore 
	\begin{align*}
	\E_{\mathbf{x}_{V \setminus S}, y} [l( (\mathbf{x}, y), \pi^*, S) | \mathbf{x}_S ] \leq \E_{\mathbf{x}_{V \setminus S}, y} [l( (\mathbf{x}, y), \pi, S) | \mathbf{x}_S ] \, .
	\end{align*}
	(Since $S = V$, and we have
	\begin{align*} 
	\E_{\mathbf{x}_{V \setminus S}, y} [l( (\mathbf{x}, y), \pi^*, S) | \mathbf{x}_S ] = \E_{y} [l( (\mathbf{x}, y), \pi^*, S) | \mathbf{x}_S ] = \E_{y} [c_{y,\pi^*(S)} | \mathbf{x}_S ] \, ,
	\end{align*}
	and the same analogously for $\pi$.)
	\item
	\paragraph{Induction step: $S \subset V$.}
	Assume that for all $S \cup \{i\}$, where $i \in V \setminus S$, the induction assumptions holds, that is 
	\begin{align*}
	\E_{\mathbf{x}_{V \setminus (S \cup \{i\})}, y} [l( (\mathbf{x}, y), \pi^*, S \cup \{i\}) | \mathbf{x}_{S \cup \{i\}}] \leq \E_{\mathbf{x}_{V \setminus (S \cup \{i\})}, y} [l( (\mathbf{x}, y), \pi, S \cup \{i\}) | \mathbf{x}_{S \cup \{i\}}]  \, .
	\end{align*}
	
	Let $\hat{\pi}$ denote a Bayes procedure. Using the structure of the loss function as defined in Equation \eqref{eq:LossDefinition},
	we have
	\begin{align*}
	&\E_{\mathbf{x}_{V \setminus S}, y} [l( (\mathbf{x}, y), \hat{\pi}, S) | \mathbf{x}_S ] \\
	&= \E_{\mathbf{x}_{V \setminus S}, y} [
	\begin{cases}
	c_{y,\hat{\pi}(\mathbf{x})} & \text{if } \hat{\pi}(\mathbf{x}) \in L \, ,  \\
	c_{\hat{\pi}(\mathbf{x})} + l((\mathbf{x}, y), \hat{\pi}, S \cup \{ \hat{\pi}(\mathbf{x}) \}) & \text{else.}
	\end{cases}
	| \mathbf{x}_S ] \\
	&\geq \min_{i \in L \cup (V \setminus S)} \E_{\mathbf{x}_{V \setminus S}, y} [
	\begin{cases}
	c_{y,i} & \text{if } i \in L \, ,  \\
	c_i + l((\mathbf{x}, y), \hat{\pi}, S \cup \{i\}) & \text{else.}
	\end{cases}
	| \mathbf{x}_S ] \\
	&= \min_{i \in L \cup (V \setminus S)} 
	\begin{cases}
	\E_{\mathbf{x}_{V \setminus S}, y} [c_{y,i} | \mathbf{x}_S ] & \text{if } i \in L \, ,  \\
	c_i + \E_{\mathbf{x}_{V \setminus S}, y} [l((\mathbf{x}, y), \hat{\pi}, S \cup \{i\}) | \mathbf{x}_S ] & \text{else.}
	\end{cases} \\
	&= \min_{i \in L \cup (V \setminus S)} 
	\begin{cases}
	\E_{\mathbf{x}_{V \setminus S}, y} [c_{y,i} | \mathbf{x}_S ] & \text{if } i \in L \, ,  \\
	c_i + \E_{x_i} \Big[ \E_{\mathbf{x}_{V \setminus (S \cup \{i\})}, y} \big[l((\mathbf{x}, y), \hat{\pi}, S \cup \{i\}) | \mathbf{x}_{S \cup \{i\}} \big] | \mathbf{x}_{S} \Big] & \text{else.}
	\end{cases} \\
	&\stackrel{(1)}{\geq} \min_{i \in L \cup (V \setminus S)} 
	\begin{cases}
	\E_{\mathbf{x}_{V \setminus S}, y} [c_{y,i} | \mathbf{x}_S ] & \text{if } i \in L \, ,  \\
	c_i + \E_{x_i} \Big[ \E_{\mathbf{x}_{V \setminus (S \cup \{i\})}, y} \big[l((\mathbf{x}, y), \pi^*, S \cup \{i\}) | \mathbf{x}_{S \cup \{i\}} \big] | \mathbf{x}_{S} \Big] & \text{else.}
	\end{cases} \\
	&= \min_{i \in L \cup (V \setminus S)} 
	\begin{cases}
	\E_{\mathbf{x}_{V \setminus S}, y} [c_{y,i} | \mathbf{x}_S ] & \text{if } i \in L \, ,  \\
	c_i + \E_{\mathbf{x}_{V \setminus S}, y} \big[l((\mathbf{x}, y), \pi^*, S \cup \{i\}) | \mathbf{x}_{S} \big] & \text{else.}
	\end{cases} \\
	&=  \E_{\mathbf{x}_{V \setminus S}, y} [l( (\mathbf{x}, y), \pi^*, S) | \mathbf{x}_S ] \, ,
	\end{align*}
	where in the line marked by (1) we used the induction assumption. The last line follows from Lemma \ref{lemma:equalityHelper}.
	Since $\hat{\pi}$ is a Bayes procedure, we must have equality in the second and fifth line. Therefore $\pi^*$ is also a Bayes procedure.
\end{proof}

\begin{lemma} \label{lemma:equalityHelper}
	\begin{align*}
	&\E_{\mathbf{x}_{V \setminus S}, y} [l( (\mathbf{x}, y), \pi^*, S) | \mathbf{x}_S ] \\
	&= \min_{i \in L \cup (V \setminus S)} 
	\begin{cases}
	\E_{\mathbf{x}_{V \setminus S}, y} [c_{y,i} | \mathbf{x}_S ] & \text{if } i \in L \, ,  \\
	c_i + \E_{\mathbf{x}_{V \setminus S}, y} \big[l((\mathbf{x}, y), \pi^*, S \cup \{i\}) | \mathbf{x}_{S} \big] & \text{else.}
	\end{cases} 
	\end{align*}
\end{lemma}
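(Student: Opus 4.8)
The plan is to obtain the identity by directly unfolding the two recursive definitions involved: the definition of $\pi^*$ in Equation \eqref{eq:optDecisionProcedure} and the recursive definition of the loss in Equation \eqref{eq:LossDefinition}. First I would observe that $\pi^*$ is well-defined: since $V$ is finite, Equation \eqref{eq:optDecisionProcedure} is a backward recursion over the lattice of subsets of $V$ — each recursive reference is to a strictly larger acquired set $S \cup \{i\}$ — anchored at the base case $S = V$, where the second branch is vacuous because condition \eqref{eq:DPcondition2} forces $\pi^*(\mathbf{x}_V) \in L$. Hence every expression appearing on the right-hand side of the lemma is meaningful.

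Next, I would fix $S \subseteq V$ and a realization $\mathbf{x}_S$, and set $i^* := \pi^*(\mathbf{x}_S) \in L \cup (V \setminus S)$, noting that $i^*$ is a deterministic, $\mathbf{x}_S$-measurable quantity. Substituting $\pi^*$ into the loss recursion \eqref{eq:LossDefinition} and taking the conditional expectation given $\mathbf{x}_S$, I would use that $i^*$ may be treated as a constant once $\mathbf{x}_S$ is fixed to obtain
\begin{align*}
&\E_{\mathbf{x}_{V \setminus S}, y} [l( (\mathbf{x}, y), \pi^*, S) | \mathbf{x}_S ] \\
&\qquad = \begin{cases}
\E_{\mathbf{x}_{V \setminus S}, y} [c_{y,i^*} | \mathbf{x}_S ] & \text{if } i^* \in L , \\
c_{i^*} + \E_{\mathbf{x}_{V \setminus S}, y} \big[l((\mathbf{x}, y), \pi^*, S \cup \{i^*\}) \,|\, \mathbf{x}_{S} \big] & \text{else,}
\end{cases}
\end{align*}
which is precisely the case expression on the right-hand side of the lemma, evaluated at $i = i^*$. (When $i^* \in L$ the integrand depends only on $y$, so the conditional expectation reduces to $\E_y[c_{y,i^*} \mid \mathbf{x}_S]$; and when $S = V$ only this first branch can occur, consistently with the base case.)

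Finally, by the definition \eqref{eq:optDecisionProcedure}, $i^* = \pi^*(\mathbf{x}_S)$ is an $\argmin$ of exactly this case expression over all $i \in L \cup (V \setminus S)$, so its value at $i = i^*$ equals the minimum over $i$; substituting yields the claimed identity. The argument is essentially bookkeeping, so there is no deep obstacle; the points deserving care are the measurability step — justifying that the data-dependent choice $\pi^*(\mathbf{x}_S)$ may be pulled out of $\E[\,\cdot \mid \mathbf{x}_S]$ as a constant — and confirming that the recursion defining $\pi^*$ terminates, which it does because each recursive reference strictly enlarges the acquired set inside the finite index set $V$.
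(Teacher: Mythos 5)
Your proposal is correct and follows essentially the same route as the paper's own proof: expand the loss recursion \eqref{eq:LossDefinition}, treat $\pi^*(\mathbf{x}_S)$ as a fixed quantity given $\mathbf{x}_S$ so the case split and the constant $c_{\pi^*(\mathbf{x}_S)}$ can be pulled out of the conditional expectation, and then invoke the $\argmin$ definition \eqref{eq:optDecisionProcedure} to identify the resulting value with the minimum over $i \in L \cup (V \setminus S)$ (the paper simply writes this last step as two explicit cases, $\pi^*(\mathbf{x}_S) \in L$ and $\pi^*(\mathbf{x}_S) \notin L$, where you unify them via $i^*$). Your added remarks on well-definedness of the recursion and on the $i \in L$ branch reducing to $\E_y[c_{y,i}\mid\mathbf{x}_S]$ are consistent with, and slightly more explicit than, the paper's treatment.
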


\begin{proof}
	\begin{align*}
	&\E_{\mathbf{x}_{V \setminus S}, y} [l( (\mathbf{x}, y), \pi^*, S) | \mathbf{x}_S ] \\
	&=  \E_{\mathbf{x}_{V \setminus S}, y} [
	\begin{cases}
	c_{y,\pi^*(\mathbf{x}_S)} & \text{if } \pi^*(\mathbf{x}_S) \in L \, ,  \\
	c_{\pi^*(\mathbf{x}_S)} + l((\mathbf{x}, y), \pi^*, S \cup \{ \pi^*(\mathbf{x}_S) \})  & \text{else.}
	\end{cases} 
	| \mathbf{x}_S ] \\
	&=  
	\begin{cases}
	\E_{\mathbf{x}_{V \setminus S}, y} [ c_{y,\pi^*(\mathbf{x}_S)} | \mathbf{x}_S ]  & \text{if } \pi^*(\mathbf{x}_S) \in L \, ,  \\
	c_{\pi^*(\mathbf{x}_S)} + \E_{\mathbf{x}_{V \setminus S}, y} [ l((\mathbf{x}, y), \pi^*, S \cup \{ \pi^*(\mathbf{x}_S) \})  | \mathbf{x}_S ]  & \text{else.}
	\end{cases} \\
	&=  
	\begin{cases}
	\E_{y} [ c_{y,\pi^*(\mathbf{x}_S)} | \mathbf{x}_S ]  & \text{if } \pi^*(\mathbf{x}_S) \in L \, ,  \\
	c_{\pi^*(\mathbf{x}_S)} + \E_{\mathbf{x}_{V \setminus S}, y} [ l((\mathbf{x}, y), \pi^*, S \cup \{\pi^*(\mathbf{x}_S)\})  | \mathbf{x}_S ]  & \text{else.}
	\end{cases} 
	\end{align*}
	\item
	\paragraph{1. Case: $\pi^*(\mathbf{x}_S) \in L$.}
	Then because of the definition of $\pi^*$, we have 
	\begin{align*}
	\E_{y} [ c_{y,\pi^*(\mathbf{x}_S)} | \mathbf{x}_S ]
	&= 
	\min_{i \in L \cup (V \setminus S)} 
	\begin{cases}
	\E_{y} [ c_{y,i} | \mathbf{x}_S ]  & \text{if } i \in L \, ,  \\
	c_{i} + \E_{\mathbf{x}_{V \setminus S}, y} [ l((\mathbf{x}, y), \pi^*, S \cup \{i\})  | \mathbf{x}_S ]  & \text{else.}
	\end{cases} 
	\end{align*}
	
	\item
	\paragraph{2. Case: $\pi^*(\mathbf{x}_S) \notin L$.}
	Then because of the definition of $\pi^*$, we have 
	\begin{align*}
	& c_{\pi^*(\mathbf{x}_S)} + \E_{\mathbf{x}_{V \setminus S}, y} [ l((\mathbf{x}, y), \pi^*, S \cup \{\pi^*(\mathbf{x}_S)\})  | \mathbf{x}_S ] 
	\\
	&= \min_{i \in L \cup (V \setminus S)} 
	\begin{cases}
	\E_{y} [ c_{y,i} | \mathbf{x}_S ]  & \text{if } i \in L \, ,  \\
	c_{i} + \E_{\mathbf{x}_{V \setminus S}, y} [ l((\mathbf{x}, y), \pi^*, S \cup \{i\})  | \mathbf{x}_S ]  & \text{else.}
	\end{cases} 
	\end{align*}
\end{proof}

\bibliographystyle{plainnat}
\bibliography{all_papers_bibliography_extended}

\end{document}